\documentclass[letterpaper]{article}
\usepackage{aaai}
\usepackage{times}
\usepackage{helvet}
\usepackage{courier}
\usepackage{amsthm}
\usepackage{amsmath}
\usepackage{amssymb}
\usepackage{color}
\usepackage{algorithmic}
\usepackage{algorithm}

\usepackage{float}
\usepackage{graphicx}

\newtheorem{lemma}{Lemma}
\newtheorem{theorem}{Theorem}
\newtheorem{definition}{Definition}

\newtheorem{remark}{Remark}

\newcommand{\beqan}{\begin{eqnarray*}}
\newcommand{\eeqan}{\end{eqnarray*}}

\newcommand{\R}{{\mathbb R}}

\newcommand{\E}{\mathbb{E}}

\newcommand{\transpose}{^\mathsf{\scriptscriptstyle T}}

\DeclareMathOperator{\regret}{regret}
\DeclareMathOperator*{\argmin}{arg\,min}
\DeclareMathOperator*{\argmax}{arg\,max}

\newcommand{\cD}{\mathcal{D}}

\newcommand{\cF}{\mathcal{F}}
\newcommand{\cG}{\mathcal{G}}
\newcommand{\cH}{\mathcal{H}}

\newcommand{\cL}{\mathcal{L}}

\newcommand{\cN}{\mathcal{N}}

\newcommand{\cR}{\mathcal{R}}

\newcommand{\cV}{\mathcal{V}}
\newcommand{\cW}{\mathcal{W}}

\newcommand{\bb}{{\bf b}}

\newcommand{\bB}{{\bf B}}

\newcommand{\bI}{{\bf I}}
\newcommand{\bQ}{{\bf Q}}

\newcommand{\bLambda}{{\boldsymbol \Lambda}}
\newcommand{\bmu}{{\boldsymbol \mu}}
\newcommand{\bxi}{{\boldsymbol \xi}}
\newcommand{\bff}{{\boldsymbol f}}

\newcommand{\specialcell}[2][c]{%
  \begin{tabular}[#1]{@{}c@{}}#2\end{tabular}}

\usepackage{natbib}

\begin{document}
%
\title{Spectral Thompson Sampling}
\author{Tom\'a\v s Koc\'ak\\
SequeL team \\
INRIA Lille - Nord Europe\\ 
France\\
\And
Michal Valko\\
SequeL team \\
INRIA Lille - Nord Europe\\ 
France\\
\And
R\'emi Munos \\
SequeL team\\
INRIA Lille, France \\
Microsoft Research NE, USA\\
\And
Shipra Agrawal\\
ML and Optimization Group\\
Microsoft Research\\
Bangalore, India\\
}
\maketitle

\begin{abstract}
\textit{Thompson Sampling} (TS) has  attracted a lot of interest due to its good
empirical
performance, in particular in computational advertising. Though successful,
the tools for its performance analysis appeared only recently.  In this paper,
we describe and analyze SpectralTS algorithm for a bandit problem, where
the payoffs of the choices are \textit{smooth} given an underlying graph. In
this
setting, each choice is a node of a graph and the expected payoffs of the
neighboring nodes are assumed to be similar.  Although the setting has
application both in recommender systems and advertising, the traditional
algorithms would scale poorly with the number of choices. For that purpose we
consider an \emph{effective dimension} $d$, which is small in real-world graphs.
We deliver the analysis showing that the regret of
SpectralTS scales as $d\sqrt{T \ln N}$ with high probability, where $T$ is the time
horizon and $N$ is the number of choices. Since a $d\sqrt{T \ln N}$ regret is
comparable to the known results, SpectralTS offers a computationally more
efficient alternative. We also show that our algorithm is competitive on both
synthetic and real-world data. 
\end{abstract}

\vspace{-.1cm}

\section{Introduction}
Thompson Sampling \citep{thompson1933likelihood}
is one of the oldest heuristics for sequential problems with limited
feedback, also known as \textit{bandit problems}.
It solves the exploration-exploitation
dilemma by a simple and intuitive rule: when choosing
the next action to play, \textit{choose it according
to probability that it is the best one}; that is the one that
\textit{maximizes the expected
payoff}.
Using this heuristic, it is straightforward
to design many bandit algorithms, such as the SpectralTS algorithm
presented in this paper.

What is challenging though, is to provide the analysis and prove
\textit{performance
guarantees} for TS algorithms.
This may be the reason, why TS has not been in the center of interest
of sequential machine learning, where mostly optimistic algorithms were studied
\citep{auer2002finite,auer2002using}.
Nevertheless, the past few years witnessed the rise of interest in TS
due to its empirical performance, in particular in
the computational advertising~\citep{chapelle2011empirical},
a major source of income for Internet companies.
This motivated the researchers to explain the success of TS.

A major breakthrough in this aspect was
the work of~\citet{agrawal2011analysis}, who provided
the first finite-time analysis of TS.
It was shortly after followed by a refined
analysis \citep{kaufmann2012thompson} showing the optimal performance of TS
for Bernoulli distributions.
Some of the asymptotic results for TS were proved by~\citet{may2012optimistic}.
\citet{agrawal2013further} then provided distribution-independent analysis of TS
for the multi-arm bandit. The most relevant results for our work are
by~\citet{agrawal2013thomson}, who bring a new martingale technique, enabling
us to analyze cases where the payoffs of the actions are linear in some basis.
Later,~\citet{korda2013thompson} extended the known optimal TS analysis
to 1-dimensional exponential family distributions.
Finally, in the Bayesian setting, \citet{russo2013eluder,russo2014learning}
analyzed TS with respect to the \textit{Bayesian risk}.

In our prior work~\citep{valko2014spectral}, we introduced a~\textit{spectral
bandit} setting,
relevant for \textit{content-based recommender
systems}~\citep{pazzani2007content},
where the payoff function is expressed as a linear combination of
a \textit{smooth
basis}. In such systems, we aim to recommend a content to a user, based on
her personal preferences. Recommender systems take advantage of the
\textit{content
similarity} in order to offer relevant suggestions. In other words,
we assume that the user preferences are smooth on the similarity graph of 
content items. The results from~\textit{manifold
learning}~\citep{belkin2006manifold} show that the eigenvectors related to the
smallest eigenvalues of the similarity graph Laplacian offer a useful smooth
basis. Another example of leveraging useful structural property present in the
real-world data is to take advantage of the hierarchy of the content
features~\citep{yue2012hierarchical}.

Although LinUCB \citep{li2010contextual}, GP-UCB \citep{srinivas2009gaussian},
and LinearTS
\citep{agrawal2013thomson} could be used for the spectral bandit
setting, they would not scale well with the number of possible items to
recommend. This is why we defined \citep{valko2014spectral} the
\textit{effective dimension} $d$, likely to be small in real-world
graphs, and provided an algorithm based on the optimistic principle:
SpectralUCB.
In this paper, we focus on the TS alternative: \textit{Spectral Thompson
Sampling} (Table~\ref{tab:comparison}). The algorithm is easy to obtain,
since there is no need to derive the upper confidence bounds.
Furthermore, one of the main benefits of SpectralTS is
its computational efficiency.

The main contribution of this paper is the finite-time analysis of
SpectralTS. We prove that the regret of SpectralTS scales as $d\sqrt{T \ln N}$,
which is comparable to the known results.
Although the regret is $\sqrt{\ln N}$ away from
the one of SpectralUCB, this factor is negligible
for the relevant applications, e.g.,~movie recommendation.
 Interestingly, even for the linear case, there is no
polynomial time
algorithm for linear contextual bandits with better than $D \sqrt{T \ln N}$
regret \citep{agrawal2012thompsonarxiv}, where $D$ is the dimension of the
context vector. Optimistic approach
(UCB) for linear contextual bandits is not polynomially implementable, where the
numbers of choices are at least exponential in $D$
(e.g.,~when set of arms is all the vectors in a polytope)
and the approximation given
by~\cite{dani2008price} achieves only $D^{3/2} \sqrt{T}$ regret.
Similarly for the spectral bandit case, SpectralTS offers
a computationally attractive alternative to SpectralUCB
(Table~\ref{tab:comparison}, left).
Instead of computing the upper confidence
bounds for all the arms in each step, we only need to sample from
our current belief of the true model and perform the maximization
given this belief. We support this claim with an empirical evaluation.

\begin{table}
 \vspace*{-1.5em}
\begin{center}
\def\arraystretch{1.9}
 \begin{tabular}[r]{|r|cc|} \hline
 & \textit{Linear} & \textit{Spectral} \\ \hline
\specialcell{\textit{Optimistic Approach} \\[-1.2em] {\scriptsize $D^2N$  per
step
update}}
 & \specialcell{\textbf{LinUCB} \\[-1.2em]  {\scriptsize $D\sqrt{T\ln{T}}$}}
& \specialcell{\textbf{SpectralUCB} \\[-1.2em] {\scriptsize
$d\sqrt{T\ln{T}}$}} \\
 \hline
\specialcell{\textit{Thompson Sampling} \\[-1.2em] {\scriptsize $D^2+DN$ per
step
update}}
 & \specialcell{\textbf{LinearTS} \\[-1.2em]  {\scriptsize $D \sqrt{T \ln N}$}}
& \specialcell{\textbf{\textit{SpectralTS}} \\[-1.2em] {\scriptsize $d\sqrt{T
\ln N}$}} \\
 \hline
\end{tabular}
\caption{Linear vs. Spectral Bandits}
\label{tab:comparison}
 \vspace*{-1.5em}
\end{center}
\end{table}

 \vspace{-.1cm}

\section{Setting}
\label{sec:setting}
In this section, we formally define the \textit{spectral bandit} setting.
The most important notation is summarized in Table~\ref{tab:notation}.
Let $\cG$ be the given graph with the set $\cV$ of $N$ nodes. Let $\cW$ be the
$N\times N$ matrix of edge weights and $\cD$ is a $N\times N$ diagonal matrix
with the entries $d_{ii} = \sum_jw_{ij}$. The graph Laplacian of $\cG$ is
a $N\times N$ matrix defined as $\cL = \cD - \cW$. Let $\cL =
\bQ\bLambda_{\cL}\bQ\transpose$ be the eigendecomposition of $\cL$, where $\bQ$
is $N\times N$ orthogonal matrix with eigenvectors of $\cL$ in columns.
Let $\{\bb_{i}\}_{1\leq i\leq N}$ be the $N$ rows of $\bQ$.
This way, each $\bb_{i}$ corresponds to the features of action $i$
(commonly referred to as the \textit{arm} $i$) in the \textit{spectral basis}.
The reason for spectral basis comes from manifold
learning~\citep{belkin2006manifold}. In our setting, we assume that the
neighboring content has similar payoffs, which means that the payoff
function is smooth on~$\cG$. \citet{belkin2006manifold} showed
that smooth functions can be represented as a linear combination
of eigenvectors with small eigenvalues. This explains the choice
of regularizer in Section~\ref{sec:algo}.

We now describe the learning setting. Each time~$t$, the recommender
chooses an action $a(t)$  and receives a payoff
that is in expectation \textit{linear} in the associated features
$\bb_{a(t)}$,
$$r(t) = \bb_{a(t)}\transpose \bmu + \varepsilon_t,$$
where $\bmu$ encodes the (unknown)
vector of user preferences  and
$\varepsilon_t$ is $R$-sub-Gaussian noise, i.e.,
$$\forall
\xi\in\R,\,\E[e^{\xi\varepsilon_t}\,|\,\{\bb_i\}_{i=1}^N,\cH_{t-1}]\leq
\exp\left(\frac{\xi^2R^2}{2}\right).$$
The history $\cH_{t-1}$ is defined as:
$$\cH_{t-1} = \{a(\tau),\,r(\tau),\,\tau = 1,\dots,t-1\}.$$

We now define the performance metric of any algorithm for the setting above.
The instantaneous (pseudo)-regret in time $t$ is defined as the difference
between the mean payoff (reward) of the optimal arm $a^*$ and the arm $a(t)$
chosen by
the recommender,
$$\regret(t) = \Delta_{a(t)} =
\bb_{a^*}\transpose\bmu-\bb_{a(t)}\transpose\bmu.$$
The performance of any algorithm is measured in terms of cumulative regret,
which is the sum of regrets over time,
$$\cR(T) = \sum_{t = 1}^T\regret(t).$$

 \begin{table}
\vspace*{-1.5em}
\begin{center}
\fbox{
 \parbox{0.75\columnwidth}{
  \begin{itemize}
\vspace*{-0.5em}
\item $N$ -- number of arms
\item $\bb_i$ -- feature vector of arm $i$
\item $d$ -- effective dimension
\item $a(t)$ -- arm played at time $t$
\item $a^*$ -- optimal arm
\item $\lambda$ -- regularization parameter
\item $C$ -- upper bound on $\|\bmu\|_{\bLambda}$
\item $\bmu$ -- true (unknown) vector of weights
\item $v = R\sqrt{6d\ln ((\lambda+T)/(\delta\lambda)}+C$
\item $p = 1/(4e\sqrt{\pi})$
\item $l = R\sqrt{2d\ln ((\lambda+T)T^2/(\delta\lambda))}+C$
\item $g = v\sqrt{4\ln (TN)} + l$
\item $\Delta_i = \bb_{a^*}\transpose\mu -\bb_i\transpose\mu$
\vspace*{-0.5em}
 \end{itemize}
  }}
\end{center}
\vspace*{-0.5em}
 \caption{Overview of the notation.}
\vspace*{-1em}
\label{tab:notation}
 \end{table}
 
 \vspace{-.1cm}

\section{Algorithm}
\label{sec:algo}
In this paper, we use TS to decide which arm to
play. Specifically, we  represent our
current knowledge about $\bmu$ as the normal distribution
$\cN(\hat\bmu(t),v^2\bB_t^{-1})$, where
$\hat\bmu(t)$ is our actual approximation of the unknown parameter $\bmu$ and
$v^2\bB_t^{-1}$ reflects our uncertainty about it.
As mentioned, we assume that the reward function is a linear
combination of eigenvectors of $\cL$ with large coefficients corresponding to
the eigenvectors with small eigenvalues. We encode this assumption into our
initial confidence ellipsoid by setting $\bB_1 = \bLambda = \bLambda_\cL +
\lambda\bI_N$, where $\lambda$ is a regularization parameter.

After that, every time step $t$ we generate a sample
$\tilde\bmu(t)$ from the distribution $\cN(\hat\bmu(t),v^2\bB_t^{-1})$
and choose an arm $a(t)$ that maximizes $\bb_i\transpose\tilde\bmu(t)$.
After receiving a reward, we update our estimate of $\bmu$ and the confidence of
it, i.e.,~we compute $\hat{\bmu}(t+1)$ and $\bB(t+1)$,
\begin{align*}
\bB_{t+1} &= \bB_t + \bb_{a(t)}\bb_{a(t)}\transpose	\\
\hat{\bmu}(t+1) &= \bB_{t+1}^{-1}\left(\sum_{i=1}^{t}\bb_{a(i)}r(i)\right).
\end{align*}

\begin{remark}
Since TS is a Bayesian approach, it requires a prior to run
and we choose it here to be a Gaussian. However, this does not pose any
assumption whatsoever about the actual data
both for the algorithm and the analysis.
The only assumptions we make about the data are: (a) that the mean payoff is
linear in the features, (b) that the noise is sub-Gaussian, and (c) that we know
a bound on the Laplacian norm of the mean reward function.
We provide a frequentist bound on the regret (and not an
average over the prior) which is a much stronger worst case result.
\end{remark}

The computational advantage of SpectralTS in Algorithm~\ref{alg}, compared to
SpectralUCB, is that we do not need to compute the confidence bound for each
arm. Indeed, in SpectralTS we need to sample $\tilde\bmu$
which can be done in $N^2$ time (note that $\bB_t$
is only changing by a rank one update) and a maximum of
$\bb_i\transpose\tilde\bmu$ which can be also done in $N^2$ time.
On the other hand, in SpectralUCB, we need to compute a $\bB_t^{-1}$ norm
for each of $N$ feature vectors which amounts to a $ND^2$ time.
Table~\ref{tab:comparison} (left) summarizes the computational
complexity of the two approaches. Notice that in our setting
$D = N$, which comes to a  $N^2$ vs.~$N^3$ time per step.
We support this argument in Section~\ref{sec:experiments}. Finally
note, that the eigendecomposition needs to be done only once in the beginning 
and since  $\cL$ is diagonally dominant,  this
can be done for $N$ in millions~\citep{koutis2010approaching}.

\begin{algorithm}
   \caption{Spectral Thompson Sampling}
   \label{alg}
\begin{algorithmic}
   \STATE {\bfseries Input:}
   \STATE \quad $N$: number of arms, $T$: number of pulls
   \STATE \quad $\{\bLambda_\cL,\bQ\}$: spectral basis of graph Laplacian $\cL$
   \STATE \quad $\lambda$, $\delta$: regularization and confidence parameters
   \STATE \quad $R$, $C$: upper bounds on noise and $\|\bmu\|_\bLambda$
   \STATE {\bfseries Initialization:}
   \STATE \quad $v = R\sqrt{6d\ln((\lambda + T)/\delta\lambda)}+C$
   \STATE \quad  $\hat{\bmu} =  0_N$, $\bff=0_N$,
   $\bB = \bLambda_\cL + \lambda\bI_N$ 
   \STATE {\bfseries Run:}   
   \FOR{$t=1$ {\bfseries to} $T$}
   \STATE Sample $\tilde{\bmu} \sim \cN(\hat{\bmu},v^2\bB^{-1})$
   \STATE $a(t)\leftarrow \argmax_{a}\bb_a\transpose\tilde{\bmu}$
   \STATE Observe a noisy reward $r(t) = \bb_{a(t)}\transpose\bmu +
\varepsilon_t$
   \STATE $\bff \gets \bff + \bb_{a(t)}r(t)$
   \STATE Update $\bB \gets \bB+ \bb_{a(t)}\bb_{a(t)}\transpose$
   \STATE Update $\hat{\bmu}\gets \bB^{-1}\bff $
   \ENDFOR
\end{algorithmic}
\end{algorithm}

In order to state our result, we first define the \textit{effective dimension},
a quantity shown to be small in the real-world
graphs~\citep{valko2014spectral}.

\begin{definition}
Let the \textbf{effective dimension} $d$ be the largest $d$ such that
$$ (d-1) \lambda_d  \leq \frac{T}{\ln(1 + T /\lambda)}.$$
\end{definition}

We would like to stress that we consider the regime when $T < N$,
because we aim for applications with a large set of arms and we are
interested in a satisfactory performance  after just a few
iterations. For instance, when we aim to recommend $N$ movies, we
would like to have useful recommendations in the time $T < N$, i.e.,~before
the user saw all of them. In the typical $T>N$ setting, $d$ can be of the order
of $N$ and our approach does not bring an improvement over linear bandit
algorithms. The following theorem upper-bounds the cumulative regret of
SpectralTS in terms of $d$.

\begin{theorem}\label{mainTheorem}
Let $d$ be the effective dimension and $\lambda$ be the minimum eigenvalue
of $\bLambda$. If $\|\bmu\|_\bLambda\leq C$ and for all $\bb_i$,
$|\bb_i\transpose \bmu|\leq1$, then the cumulative regret of Spectral
Thompson Sampling is with probability at least $1-\delta$ bounded as
\begin{align*}
\cR(T)\leq\,&\frac{11g}{p}\sqrt{\frac{4+4\lambda}{\lambda}dT\ln\frac{\lambda+T}{\lambda}}+\frac{1}{T}	\\
&+ \frac{g}{p}\left(\frac{11}{\sqrt{\lambda}}+2\right)\sqrt{2T\ln\frac{2}{\delta}},
\end{align*}
where $p = 1/(4e\sqrt{\pi})$ and
\begin{align*}
g =\, &\sqrt{4\ln TN}\left(R\sqrt{6d\ln\left(\frac{\lambda+T}{\delta\lambda}\right)}+C\right)	\\
&+R\sqrt{2d\ln\left(\frac{(\lambda+T)T^2}{\delta\lambda}\right)}+C.
\end{align*}
\end{theorem}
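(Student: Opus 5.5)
We will adapt the martingale analysis of LinearTS due to Agrawal and Goyal to the spectral setting. The only place where the dimension enters is through determinant/volume quantities. There we replace the ambient dimension $N$ by the effective dimension $d$, using the bound $\ln(|\bB_t|/|\bLambda|)\le 2d\ln(1+T/\lambda)$ from our prior work on SpectralUCB.

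\textbf{Good events.} Write $s_{t,i}=\|\bb_i\|_{\bB_t^{-1}}$ and define two events. Let $E^{\hat\mu}(t)$ be the event that $|\bb_i\transpose\hat\bmu(t)-\bb_i\transpose\bmu|\le l\, s_{t,i}$ for all $i$. Let $E^{\tilde\mu}(t)$ be the event that $|\bb_i\transpose\tilde\bmu(t)-\bb_i\transpose\hat\bmu(t)|\le v\sqrt{4\ln(TN)}\,s_{t,i}$ for all $i$.
\begin{itemize}
\item The event $E^{\hat\mu}(t)$ holds for all $t$ with probability $1-\delta/2$. This follows from the self-normalized bound of Abbasi-Yadkori et al. with regularizer $\bLambda$, which gives $\|\hat\bmu(t)-\bmu\|_{\bB_t}\le R\sqrt{2\ln(|\bB_t|^{1/2}|\bLambda|^{-1/2}/\delta')}+C$. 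Combining it with the effective-dimension determinant bound yields exactly $l$; the $T^2$ inside the logarithm comes from choosing $\delta'=\delta/T^2$.
\item Given $\cH_{t-1}$, the event $E^{\tilde\mu}(t)$ holds with probability at least $1-1/T^2$. Indeed, $\bb_i\transpose\tilde\bmu$ is Gaussian with standard deviation $v s_{t,i}$, so a Gaussian tail bound plus a union bound over the $N$ arms suffices.
\end{itemize}
On both events, $|\bb_i\transpose\tilde\bmu-\bb_i\transpose\bmu|\le g\,s_{t,i}$.

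\textbf{Anti-concentration and saturated arms.} Call arm $i$ \emph{saturated} if $\Delta_i>g s_{t,i}$. The optimal arm is unsaturated. The key lemma is that on $E^{\hat\mu}(t)$ we have $\Pr(\bb_{a^*}\transpose\tilde\bmu>\bb_{a^*}\transpose\bmu\mid\cH_{t-1})\ge p$. This holds because the standardized gap $(\bb_{a^*}\transpose\bmu-\bb_{a^*}\transpose\hat\bmu)/(v s_{t,a^*})$ is at most $l/v\le1$, and the standard Gaussian anti-concentration bound gives probability $\ge 1/(4\sqrt\pi)e^{-1}=p$. Note that $v$ uses $6d$ while $l$ uses $2d$ with the extra $T^2$; I expect $l\le v$ to hold for the relevant $T$, and I would check it explicitly.

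Following Agrawal--Goyal, let $\bar a(t)$ be the unsaturated arm with smallest $s_{t,i}$. Whenever $\bb_{a^*}\transpose\tilde\bmu>\bb_{a^*}\transpose\bmu$ and $E^{\tilde\mu}(t)$ holds, every saturated arm $j$ satisfies
\[
\bb_j\transpose\tilde\bmu\le\bb_j\transpose\bmu+g s_{t,j}<\bb_{a^*}\transpose\bmu ,
\]
so the played arm is unsaturated. Consequently:
\begin{itemize}
\item $\Pr(a(t)\text{ unsaturated}\mid\cH_{t-1})\ge p-1/T^2$;
\item $\E[s_{t,a(t)}\mid\cH_{t-1}]\ge (p-1/T^2)\,s_{t,\bar a(t)}$;
\item on the good events, $\Delta_{a(t)}\le \Delta_{\bar a(t)}+2g s_{t,\bar a(t)}+g s_{t,a(t)}\le 3g s_{t,\bar a(t)}+g s_{t,a(t)}$.
\end{itemize}
Together these give $\E[\regret'(t)\mid\cH_{t-1}]\le \frac{11g}{p}\E[s_{t,a(t)}\mid\cH_{t-1}]+1/T^2$, where $\regret'(t)=\regret(t)\cdot\mathbf 1\{E^{\hat\mu}(t)\}$.

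\textbf{Martingale step and summation.} Set $X_t=\regret'(t)-\frac{11g}{p}s_{t,a(t)}-1/T^2$. This is a super-martingale difference sequence with $|X_t|\le \frac{g}{p}\big(\frac{11}{\sqrt\lambda}+2\big)$, because $s_{t,i}\le 1/\sqrt\lambda$ and $\Delta\le2$. Azuma--Hoeffding at level $\delta/2$ produces the last term of the bound, and the $1/T^2$ terms sum to $1/T$.

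It remains to bound $\sum_t s_{t,a(t)}$. By Cauchy--Schwarz, $\sum_t s_{t,a(t)}\le\sqrt{T\sum_t s_{t,a(t)}^2}$. Since $s^2\le 1/\lambda$, we have $s^2\le\frac{1+\lambda}{\lambda}\cdot 2\ln(1+s^2)$, hence
\[
\sum_t s^2\le \frac{2+2\lambda}{\lambda}\ln\frac{|\bB_{T+1}|}{|\bLambda|}\le\frac{4+4\lambda}{\lambda}d\ln\frac{\lambda+T}{\lambda},
\]
where the last step uses the effective-dimension determinant lemma. This yields the first term.

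\textbf{Main obstacle.} The crux is the anti-concentration lemma together with the requirement $l\le v$, and more importantly carrying the dimension-free determinant bound through both the confidence width $l$ and the elliptical potential sum. If $l\le v$ fails, I would first try rescaling the anti-concentration threshold, at the cost of changing the constant $p$.
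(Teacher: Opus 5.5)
This is essentially the paper's proof, step for step: the same events with $\delta'=\delta/T^2$, anti-concentration via $l/v\le 1$, the saturated-arm argument through $\overline{a}(t)$, the super-martingale $X_t$ with Azuma--Hoeffding at level $\delta/2$, and Cauchy--Schwarz with the $(2+2/\lambda)\ln(1+s^2)$ bound and the effective-dimension log-det lemma.

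Two small points. First, you double-count $g s_{t,\overline{a}(t)}$. The chain actually gives $\Delta_{a(t)}\le\Delta_{\overline{a}(t)}+g s_{t,\overline{a}(t)}+g s_{t,a(t)}\le 2g s_{t,\overline{a}(t)}+g s_{t,a(t)}$. It is this $2g$, combined with $1/(p-1/T^2)\le 5/p$ for $T\ge5$, that yields $10g/p+g\le 11g/p$; with the same estimate your $3g$ would only give $15g/p+g$. Second, your pending check $l\le v$ reduces to $\delta\lambda T\le\lambda+T$. This holds e.g.\ whenever $\lambda\le 1/\delta$, and the paper simply asserts $l/v\le 1$ without stating this condition.
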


\begin{remark}
Substituting $g$ and $p$ we see that regret bound scales as
$d\sqrt{T\ln{N}}$. Note that $N=D$ could be exponential in $d$ and we need to
consider factor $\sqrt{\ln{N}}$ in our bound. On the other hand, if $N$ is
indeed exponential in $d$, then our algorithm scales with
$\ln{D}\sqrt{T\ln{D}}=\ln(D)^{3/2}\sqrt{T}$ which is even better.
\end{remark}

\vspace{-.1cm}


\section{Analysis}
\label{sec:analysis}

\paragraph{Preliminaries}

In the first five lemmas we state the known results
on which we build in our analysis.

\begin{lemma}
\label{gaussianConcentration}
For a Gaussian distributed random variable $Z$ with mean $m$ and variance
$\sigma^2$, for any $z\geq1$,
$$\frac{1}{2\sqrt{\pi}z}e^{-z^2/2}\leq Pr(|Z-m|>\sigma
z)\leq\frac{1}{\sqrt{\pi}z}e^{-z^2/2}.$$
\end{lemma}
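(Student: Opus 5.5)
The plan is to reduce to a standard normal and use Mills-ratio estimates. Put $X=(Z-m)/\sigma\sim\cN(0,1)$. By symmetry,
$$Pr(|Z-m|>\sigma z)=Pr(|X|>z)=\sqrt{\tfrac{2}{\pi}}\int_z^\infty e^{-x^2/2}\,dx .$$
So both inequalities reduce to two-sided estimates of $I(z)=\int_z^\infty e^{-x^2/2}dx$ for $z\geq 1$.

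\textbf{Lower bound.} I will use the classical estimate $I(z)\geq \frac{z}{1+z^2}e^{-z^2/2}$. To prove it, set $h(z)=I(z)-\frac{z}{1+z^2}e^{-z^2/2}$. Differentiating gives
$$h'(z)=-\frac{2}{(1+z^2)^2}e^{-z^2/2}<0,$$
and $h(z)\to 0$ as $z\to\infty$, so $h\geq 0$. For $z\geq 1$ we have $1+z^2\leq 2z^2$, hence $\frac{z}{1+z^2}\geq\frac{1}{2z}$. Therefore
$$Pr(|X|>z)\geq \sqrt{\tfrac{2}{\pi}}\,\frac{1}{2z}e^{-z^2/2}\geq \frac{1}{2\sqrt{\pi}\,z}e^{-z^2/2},$$
which is the stated lower bound, with room to spare since $\sqrt{2}\geq 1$.

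\textbf{Upper bound.} On $[z,\infty)$ we have $x/z\geq 1$, so
$$I(z)\leq \int_z^\infty \frac{x}{z}e^{-x^2/2}\,dx=\frac{1}{z}e^{-z^2/2}.$$
This gives $Pr(|X|>z)\leq \sqrt{2/\pi}\,\frac{1}{z}e^{-z^2/2}$.

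\textbf{Main obstacle: the stated upper constant is too small.} The bound just derived has constant $\sqrt{2/\pi}$, not the $1/\sqrt{\pi}$ in the statement, and this gap cannot be closed. Asymptotically $Pr(|X|>z)\sim\sqrt{2/\pi}\,z^{-1}e^{-z^2/2}$, so the inequality with constant $1/\sqrt{\pi}$ fails for large $z$. A concrete check at $z=3$: $Pr(|X|>3)\approx 0.0027$, whereas $\frac{1}{3\sqrt{\pi}}e^{-9/2}\approx 0.0021$. At $z=1$ it does still hold ($0.317$ versus $0.342$).

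The likely source of the factor is a variable mix-up in the Abramowitz--Stegun bound $\mathrm{erfc}(x)\leq \frac{1}{\sqrt{\pi}\,x}e^{-x^2}$. Substituting $x=z/\sqrt{2}$, as is needed here, produces exactly the constant $\sqrt{2/\pi}$.

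\textbf{What I will prove instead.} I will prove the corrected upper bound $Pr(|Z-m|>\sigma z)\leq \sqrt{2/\pi}\,\frac{1}{z}e^{-z^2/2}$, together with the lower bound as stated. Downstream in the analysis this changes only absolute constants, such as those entering $p$ and the $\sqrt{4\ln TN}$ factor in $g$, and does not affect the $d\sqrt{T\ln N}$ rate.
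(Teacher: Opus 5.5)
Your proposal is correct, and so is its central observation: the upper bound in the statement is false. Your numbers check out, and the failure is not marginal. The ratio of the true tail to the claimed bound is $\sqrt{2}\,z e^{z^2/2}\int_z^\infty e^{-x^2/2}\,dx$. This ratio increases to $\sqrt{2}$; its monotonicity is equivalent to the very Mills-ratio lower bound you prove. So the stated inequality fails for every $z$ beyond roughly $1.19$, e.g.\ $0.0455$ against $0.0382$ at $z=2$.

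The paper gives no proof of this lemma. It lists it among the known results it builds on, in the form used by Agrawal and Goyal, which goes back to the Abramowitz--Stegun erfc bounds. Your diagnosis of the mis-substitution in $\mathrm{erfc}(x)\le e^{-x^2}/(\sqrt{\pi}\,x)$ accounts exactly for the wrong constant. Relative to that citation, your route is simply self-contained. The identity $h'(z)=-2e^{-z^2/2}/(1+z^2)^2$ yields the lower bound with a spare factor $\sqrt{2}$. The comparison $x/z\ge 1$ yields the correct upper constant $\sqrt{2/\pi}$, valid for all $z>0$.

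One correction to your last paragraph: no downstream constant actually has to change. The constant $p=1/(4e\sqrt{\pi})$ of Lemma~\ref{lemmaWithLambda} comes only from the lower bound, which holds as stated. The upper bound is used only in Lemma~\ref{concentrationOfMus} at $z=\sqrt{4\ln(TN)}$. There $e^{-z^2/2}=(TN)^{-2}$, while only $1/(T^2N)$ is needed. With your constant, the required inequality becomes $\sqrt{2}\le N\sqrt{4\pi\ln(TN)}$, which is true whenever $TN\ge 2$. So $g$, $p$ and Theorem~\ref{mainTheorem} stand unchanged.
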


\noindent
Multiple use of Sylvester's determinant theorem gives:
\begin{lemma}
\label{lemma:logdetassum}Let $\bB_t = \bLambda + \sum_{\tau =
1}^{t-1}\bb_{\tau}\bb_\tau\transpose$, then
$$\ln\frac{|\bB_t|}{|\bLambda|} = \sum_{\tau = 1}^{t-1}\ln(1 +
\|\bb_\tau\|_{\bB_{\tau}^{-1}})$$
\end{lemma}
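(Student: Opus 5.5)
The plan is a telescoping argument: track how the determinant changes after a single rank-one update, then chain these changes from $\bB_1 = \bLambda$ up to $\bB_t$. By definition, $\bB_1 = \bLambda$ and $\bB_{\tau+1} = \bB_\tau + \bb_\tau\bb_\tau\transpose$ for every $\tau\geq 1$. Since $\bLambda = \bLambda_\cL + \lambda\bI_N$ with $\lambda>0$, every $\bB_\tau$ is symmetric positive definite. Hence $\bB_\tau^{1/2}$ and $\bB_\tau^{-1}$ exist and all the determinants involved are strictly positive, so the logarithms are well defined.

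The key one-step identity is obtained by factoring $\bB_\tau$ out of the update,
\begin{align*}
\bB_{\tau+1} = \bB_\tau^{1/2}\left(\bI_N + \bB_\tau^{-1/2}\bb_\tau\bb_\tau\transpose\bB_\tau^{-1/2}\right)\bB_\tau^{1/2}.
\end{align*}
Taking determinants and applying Sylvester's determinant theorem $|\bI_N + uw\transpose| = 1 + w\transpose u$ with $u = w = \bB_\tau^{-1/2}\bb_\tau$ gives
\begin{align*}
|\bB_{\tau+1}| = |\bB_\tau|\left(1 + \bb_\tau\transpose\bB_\tau^{-1}\bb_\tau\right).
\end{align*}
The quantity $\bb_\tau\transpose\bB_\tau^{-1}\bb_\tau$ is the squared norm $\|\bb_\tau\|^2_{\bB_\tau^{-1}}$. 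This is how I would read $\|\bb_\tau\|_{\bB_\tau^{-1}}$ in the statement of the lemma, consistent with the standard elliptical-potential identity.

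Finally, I take logarithms and sum the one-step identity over $\tau = 1,\dots,t-1$. The left-hand side telescopes to $\ln|\bB_t| - \ln|\bB_1| = \ln(|\bB_t|/|\bLambda|)$, and the right-hand side is $\sum_{\tau=1}^{t-1}\ln\left(1+\|\bb_\tau\|^2_{\bB_\tau^{-1}}\right)$, which is the claimed formula. Equivalently, one can run an induction on $t$, with the trivial base case $t=1$ where both sides equal $0$.

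There is no real obstacle here. The only points needing care are the index bookkeeping, namely that the $\bB_\tau$ appearing in the norm is the matrix before $\bb_\tau$ is added, and the squared-versus-unsquared reading of the norm in the statement.
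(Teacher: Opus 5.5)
Your proof is correct and follows the paper's route exactly: the paper justifies this lemma only by ``multiple use of Sylvester's determinant theorem,'' which is your one-step identity $|\bB_{\tau+1}| = |\bB_\tau|(1+\bb_\tau\transpose\bB_\tau^{-1}\bb_\tau)$ telescoped from $\bB_1=\bLambda$ up to $\bB_t$. You are also right to read the norm as squared, because the identity fails with the unsquared norm (for example, $N=1$, $\bLambda=1/4$, $\bb_1=1$ gives $\ln 5$ versus $\ln 3$), and the paper itself uses the squared version when it combines this lemma with Lemma~\ref{lem:bbound} in the proof of Theorem~\ref{mainTheorem}.
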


\begin{lemma}{\citep{abbasi2011improved}.}
\label{lem:selfnorm}
 Let $\bB_t = \bLambda + \sum^{t-1}_{\tau=1}\bb_{\tau}\bb_{\tau}\transpose$
and define $\bxi_t = \sum_{\tau=1}^{t-1} \varepsilon_\tau \bb_{\tau}$.
With probability at least $1-\delta$, $\forall t\geq 1:$
\begin{align*}
\|\bxi_t\|^2_{\bB_t^{-1}}
		\leq&  2 R^2 \ln \left(\frac{|\bB_t|^{1/2}}
  {\delta|\bLambda|^{1/2}}\right) .		
\end{align*}
\end{lemma}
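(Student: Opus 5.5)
The proof I would give is the method-of-mixtures argument of \citet{abbasi2011improved}, specialised to the regulariser $\bLambda$ instead of $\lambda\bI$. Throughout, $\bb_\tau$ stands for $\bb_{a(\tau)}$. I would use the filtration $\cF_{\tau}$ generated by $\cH_{\tau}$ together with the algorithm's internal sampling up to the choice of $a(\tau+1)$. Then $\bb_\tau$ is $\cF_{\tau-1}$-measurable, and $\varepsilon_\tau$ is conditionally $R$-sub-Gaussian given $\cF_{\tau-1}$. (I take the sub-Gaussian assumption to hold conditionally on this slightly larger $\sigma$-field, which is harmless since $\tilde\bmu$ is drawn independently of the noise.) Write $\bB_t=\bLambda+\mathbf{V}_t$ with $\mathbf{V}_t=\sum_{\tau<t}\bb_\tau\bb_\tau\transpose$.

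\emph{Step 1 (exponential supermartingale).} For fixed $x\in\R^N$, define
$$M_t^x=\exp\Big(\tfrac{1}{R}x\transpose\bxi_t-\tfrac12\|x\|^2_{\mathbf{V}_t}\Big).$$
Conditioning on $\cF_{t-1}$ and applying sub-Gaussianity with $\xi=x\transpose\bb_t/R$ gives $\E[M_{t+1}^x\mid\cF_{t-1}]\le M_t^x$. Hence $(M_t^x)_t$ is a nonnegative supermartingale with $\E M_t^x\le 1$.

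\emph{Step 2 (Gaussian mixture).} Integrate $M_t^x$ against the density $h$ of $\cN(0,\bLambda^{-1})$ and set $M_t=\int M_t^x h(x)\,dx$. By Tonelli, $M_t$ is still a nonnegative supermartingale with $\E M_t\le1$. Completing the square in the exponent, which combines to $-\tfrac12\|x\|^2_{\bB_t}+x\transpose\bxi_t/R$, and evaluating the Gaussian integral gives the closed form
$$M_t=\Big(\frac{|\bLambda|}{|\bB_t|}\Big)^{1/2}\exp\Big(\frac{1}{2R^2}\|\bxi_t\|^2_{\bB_t^{-1}}\Big).$$

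\emph{Step 3 (uniformity in $t$).} This is the only delicate point, since the lemma asks for all $t$ simultaneously. Let $\tau$ be the first $t$ at which the claimed inequality fails, and $\tau=\infty$ if it never fails. By the supermartingale convergence theorem $M_\infty=\lim_t M_t$ exists almost surely. Fatou's lemma combined with optional stopping for the bounded times $\tau\wedge n$ then gives $\E M_\tau\le1$. On the event $\{\tau<\infty\}$ the failed inequality is equivalent, after taking logarithms and rearranging, to $M_\tau>1/\delta$. Markov's inequality therefore yields $Pr(\tau<\infty)\le\delta$, which is exactly the statement.

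In the paper's regime $t\le T$, one could skip the limit argument and apply Doob's maximal inequality for nonnegative supermartingales to $\max_{t\le T}M_t$. Either route works. The algebra in Step 2 and the measurability check in Step 1 are routine; Step 3 is the point where care is needed.
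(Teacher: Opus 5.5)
Your proposal is correct and is essentially the argument the paper relies on. The paper gives no proof of its own: it cites the self-normalized bound (Theorem~1) of \citet{abbasi2011improved}, whose proof is exactly your construction of an exponential supermartingale, a Gaussian mixture with covariance $\bLambda^{-1}$, and a stopping-time argument for uniformity in $t$. Two small remarks: that theorem is already stated for an arbitrary positive definite regulariser, so using it with $\bLambda$ needs no specialisation. Also, your explicit enlargement of the filtration to include the sampling of $\tilde\bmu$, which makes $\bb_{a(t)}$ predictable, is a point the paper leaves implicit.
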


\noindent
The next lemma is a generalization of Theorem~2 in~\cite{abbasi2011improved}
for any $\bLambda$.

\begin{lemma}{(Lemma~3 by~\citet{valko2014spectral}).}
\label{lem:confinterval}
 Let $\| \bmu \|_{\bLambda} \leq C$ and $\bB_t$ is as above.
With probability at least $1-\delta$, for any $\bb$ and $t\geq 1:$
\begin{align*}
|\bb\transpose (\hat \bmu(t)-\bmu)|
		\leq& \|\bb  \|_{\bB_t^{-1}} \left(R \sqrt{2\ln \left(\frac{|\bB_t|^{1/2}}
  {\delta|\bLambda|^{1/2}}\right)} + C \right)
\end{align*}
\end{lemma}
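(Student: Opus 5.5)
The plan is to write the estimation error $\hat\bmu(t)-\bmu$ exactly as a noise term plus a regularization-bias term, and to bound each against $\|\bb\|_{\bB_t^{-1}}$ using Cauchy--Schwarz in the $\bB_t^{-1}$ inner product. The noise term will then be controlled by Lemma~\ref{lem:selfnorm} and the bias term by $\|\bmu\|_\bLambda\le C$.

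\emph{Error decomposition.} Write $\bb_\tau$ for $\bb_{a(\tau)}$ and substitute $r(\tau)=\bb_\tau\transpose\bmu+\varepsilon_\tau$ into the definition of $\hat\bmu(t)$. With $\bxi_t=\sum_{\tau=1}^{t-1}\varepsilon_\tau\bb_\tau$ this gives
\begin{align*}
\hat\bmu(t)&=\bB_t^{-1}\Big(\sum_{\tau=1}^{t-1}\bb_\tau\bb_\tau\transpose\bmu+\bxi_t\Big)
=\bB_t^{-1}\big((\bB_t-\bLambda)\bmu+\bxi_t\big)\\
&=\bmu-\bB_t^{-1}\bLambda\bmu+\bB_t^{-1}\bxi_t .
\end{align*}
Hence, for any $\bb$,
$$\bb\transpose(\hat\bmu(t)-\bmu)=\bb\transpose\bB_t^{-1}\bxi_t-\bb\transpose\bB_t^{-1}\bLambda\bmu .$$
Since $\bB_t$ is positive definite, $\langle x,y\rangle=x\transpose\bB_t^{-1}y$ is an inner product. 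Applying Cauchy--Schwarz to each term yields
$$|\bb\transpose(\hat\bmu(t)-\bmu)|\le\|\bb\|_{\bB_t^{-1}}\big(\|\bxi_t\|_{\bB_t^{-1}}+\|\bLambda\bmu\|_{\bB_t^{-1}}\big).$$

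\emph{Bias term.} This is the only step needing care, because it is where a general $\bLambda$ replaces $\lambda\bI$ in the original Theorem~2 of \citet{abbasi2011improved}. First, $\bLambda=\bLambda_\cL+\lambda\bI_N$ is positive definite, since $\bLambda_\cL\succeq0$ and $\lambda>0$. Next, $\bB_t\succeq\bLambda$, so $\bB_t^{-1}\preceq\bLambda^{-1}$ by operator monotonicity of inversion. Therefore
$$\|\bLambda\bmu\|^2_{\bB_t^{-1}}=\bmu\transpose\bLambda\bB_t^{-1}\bLambda\bmu\le\bmu\transpose\bLambda\bLambda^{-1}\bLambda\bmu=\|\bmu\|^2_\bLambda\le C^2 .$$

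\emph{Noise term.} Lemma~\ref{lem:selfnorm} gives, with probability at least $1-\delta$ and simultaneously for all $t\ge1$,
$$\|\bxi_t\|_{\bB_t^{-1}}\le R\sqrt{2\ln\big(|\bB_t|^{1/2}/(\delta|\bLambda|^{1/2})\big)} .$$
On this single event the resulting bound holds for every $\bb$ and every $t$ at once. This is because the vector $\bb$ enters only through the deterministic Cauchy--Schwarz step, so no union bound over arms or times is needed.

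Combining the bias and noise bounds with the Cauchy--Schwarz inequality above gives the claimed inequality.
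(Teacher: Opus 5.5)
Your proof is correct and follows the standard argument behind the cited result. The paper only invokes Lemma~3 of \citet{valko2014spectral}, which extends Theorem~2 of \citet{abbasi2011improved} to a general $\bLambda$ exactly as you do: decompose $\hat\bmu(t)-\bmu=\bB_t^{-1}\bxi_t-\bB_t^{-1}\bLambda\bmu$, apply Cauchy--Schwarz in the $\bB_t^{-1}$ inner product, bound the noise term with Lemma~\ref{lem:selfnorm}, and use $\bB_t^{-1}\preceq\bLambda^{-1}$ to get $\|\bLambda\bmu\|_{\bB_t^{-1}}\le\|\bmu\|_{\bLambda}\le C$, with a single event covering all $\bb$ and $t$.
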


\begin{lemma}{(Lemma~7 by~\citet{valko2014spectral}).}
\label{lem:logdetratio}
Let $d$ be the effective dimension and $t\leq T+1$. Then: $$\ln\frac{|\bB_t|}{| \bLambda  |} \leq 2 d \ln\left(1+\frac{T}{\lambda}\right).$$
\end{lemma}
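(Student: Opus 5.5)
The bound follows from two facts. First, $\bB_t$ is $\bLambda$ plus at most $T$ rank-one terms of unit norm. Second, the effective dimension separates the spectrum of $\bLambda$ into a ``small'' and a ``large'' part, and I will bound the contribution of each part to the log-determinant separately. Throughout, let $\lambda_1\le\dots\le\lambda_N$ be the eigenvalues of $\bLambda=\bLambda_\cL+\lambda\bI_N$ in increasing order, so $\lambda_i\ge\lambda$ for all $i$. I read the $\lambda_d$ in the definition of $d$ as referring to these eigenvalues.

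\emph{Step 1: reduce to a diagonal optimization.} $\bLambda$ is diagonal in the spectral basis. Each $\bb_{a(\tau)}$ is a row of the orthogonal matrix $\bQ$, so $\|\bb_{a(\tau)}\|_2=1$. The diagonal entries of $\bB_t$ are therefore $\lambda_i+t_i$, where $t_i=\sum_{\tau<t}(\bb_{a(\tau)})_i^2\ge 0$ and $\sum_i t_i=t-1\le T$. By Hadamard's inequality for the positive definite matrix $\bB_t$,
$$\ln\frac{|\bB_t|}{|\bLambda|}\le\sum_{i=1}^N\ln\Bigl(1+\frac{t_i}{\lambda_i}\Bigr)\le\max_{t_i\ge0,\ \sum_i t_i\le T}\ \sum_{i=1}^N\ln\Bigl(1+\frac{t_i}{\lambda_i}\Bigr).$$

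\emph{Step 2: split at $d$.} If $d=N$, every term satisfies $\ln(1+t_i/\lambda_i)\le\ln(1+T/\lambda)$, so the sum is at most $N\ln(1+T/\lambda)=d\ln(1+T/\lambda)$ and we are done. Otherwise, split the sum into $i\le d$ and $i>d$.

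For $i\le d$, use $t_i\le T$ and $\lambda_i\ge\lambda$. Each term is then at most $\ln(1+T/\lambda)$, so these indices contribute at most $d\ln(1+T/\lambda)$.

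For $i>d$, use $\ln(1+x)\le x$ and $\lambda_i\ge\lambda_{d+1}$ to get
$$\sum_{i>d}\ln\Bigl(1+\frac{t_i}{\lambda_i}\Bigr)\le\sum_{i>d}\frac{t_i}{\lambda_{d+1}}\le\frac{T}{\lambda_{d+1}}.$$
Because $d$ is the \emph{largest} integer satisfying the defining inequality, $d+1$ violates it: $d\,\lambda_{d+1}>T/\ln(1+T/\lambda)$. Hence $T/\lambda_{d+1}<d\ln(1+T/\lambda)$.

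\emph{Step 3: combine.} Adding the two parts gives $\ln(|\bB_t|/|\bLambda|)\le 2d\ln(1+T/\lambda)$ for every $t\le T+1$.

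The only delicate point is Step 1: justifying that the diagonal entries of $\bB_t$ in the eigenbasis of $\bLambda$ can be treated as free budgets $t_i$ with total at most $T$. This relies on the unit norm of the rows of $\bQ$ together with Hadamard's inequality. The indexing convention for $\lambda_d$ must also match the definition of $d$; once both are fixed, the rest is the elementary split above.
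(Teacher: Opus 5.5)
Your proof is correct. It differs from the source only in the step you flagged as delicate.

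The paper does not prove this lemma; it imports it from \citet{valko2014spectral}. As I recall that source, the reduction $\ln(|\bB_t|/|\bLambda|)\le\max_{\sum_i t_i\le T}\sum_i\ln(1+t_i/\lambda_i)$ there comes from an auxiliary determinant-maximization lemma. By Sylvester's identity $|A+xx\transpose|=|A|(1+x\transpose A^{-1}x)$, a single unit-norm rank-one update increases the determinant most when $x$ is an eigenvector of $A$. This is then extended to the claim that $|\bLambda+\sum_\tau x_\tau x_\tau\transpose|$ is maximized when all $x_\tau$ are aligned with the coordinate axes, with a perturbation argument to handle repeated eigenvalues. From there the argument matches your Steps~2--3: bound the first $d$ terms by $\ln(1+T/\lambda)$, linearize the tail with $\ln(1+x)\le x$, and use that $d+1$ violates the defining inequality.

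Your use of Hadamard's inequality on the diagonal entries $\lambda_i+t_i$ replaces that whole maximization argument with a one-line bound. It needs no characterization of maximizers, no genericity or limiting step, and no integrality of the $t_i$, so it is shorter and more plainly rigorous. The alignment lemma says slightly more, namely when the bound is attained, but that is irrelevant for an upper bound.

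Your separate treatment of $d=N$ is correct. The indexing worry is harmless: the argument works equally if $\lambda_d$ in the definition is read as an eigenvalue of $\cL$, since the eigenvalues of $\bLambda=\bLambda_\cL+\lambda\bI_N$ dominate those of $\cL$.
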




\paragraph{Cumulative Regret analysis}
Our analysis is based on the proof technique of~\citet{agrawal2013thomson}.
The summary of the technique follows. Each time an arm is played, our
algorithm improves the confidence about our actual estimate of 
$\bmu$ via update of $\bB_t$ and thus the update of confidence ellipsoid.
However, when we play a
suboptimal arm, the regret we obtain can be
much
higher than the improvement of our knowledge. To overcome this difficulty, the
arms are divided into two groups of \textit{saturated} and \textit{unsaturated}
arms, based on whether the standard deviation for an arm is smaller than the
 standard deviation of the optimal arm
(Definition~\ref{saturation}) or not. Consequently, the optimal arm is in group
of
unsaturated arms. The idea is to bound the regret of playing
an unsaturated arm in terms of standard deviation and to show that
the probability that the saturated arm is played is small enough.
This way we overcome the difficulty of high regret and small knowledge
 obtained by playing an arm.
In the following we use the notation from Table~\ref{tab:notation}.

\begin{definition}
We define $E^{\hat\mu}(t)$ as the event that for all $i$,
$$|\bb_i\transpose\hat\bmu(t)-\bb_i\transpose\bmu|\leq l\|\bb_i\|_{\bB_t^{-1}}$$
and $E^{\tilde\mu}(t)$ as the event that for all $i$,
$$|\bb_i\transpose\tilde\bmu(t)-\bb_i\transpose\hat\bmu(t)|\leq v\|\bb_i\|_{\bB_t^{-1}}\sqrt{4\ln(TN)}.$$
\end{definition}

\begin{definition}\label{saturation}
We say that an arm $i$ is \textbf{saturated} at time $t$ if
$\Delta_i>g\|\bb_i\|_{\bB_t^{-1}}$, and \textbf{unsaturated}
otherwise (including $a^*$). Let $C(t)$ denote the set of saturated arms at 
time $t$.
\end{definition}

\begin{definition}
We define filtration $\cF_{t-1}$ as the union of the history until time $t-1$
and
features, i.e.,
$$\cF_{t-1} = \{\cH_{t-1}\} \cup \{\bb_i,i=1,\dots,N\}$$
By definition, $\cF_1\subseteq\cF_2\subseteq\dots\subseteq\cF_{T-1}$.
\end{definition}

\begin{lemma}\label{concentrationOfMus}
For all $t$, $0<\delta<1$, $Pr(E^{\hat\mu}(t))\geq1-\delta/T^2$ and for all
possible filtrations $\cF_{t-1}$,
$$Pr(E^{\tilde\mu}(t)\,|\,\cF_{t-1})\geq1-1/T^2.$$
\end{lemma}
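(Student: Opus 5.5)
The lemma has two parts, and I will prove them separately. The first part concerns $E^{\hat\mu}(t)$ and combines the confidence ellipsoid of Lemma~\ref{lem:confinterval} with the log-determinant bound of Lemma~\ref{lem:logdetratio}. The second part concerns $E^{\tilde\mu}(t)$ and uses the Gaussian tail bound of Lemma~\ref{gaussianConcentration} together with a union bound over the $N$ arms.

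For $E^{\hat\mu}(t)$, I apply Lemma~\ref{lem:confinterval} with confidence parameter $\delta' = \delta/T^2$ in place of $\delta$. With probability at least $1-\delta/T^2$, simultaneously for every $\bb$ (in particular every $\bb_i$),
$$|\bb_i\transpose(\hat\bmu(t)-\bmu)| \leq \|\bb_i\|_{\bB_t^{-1}}\left(R\sqrt{2\ln\frac{|\bB_t|^{1/2}T^2}{\delta|\bLambda|^{1/2}}}+C\right).$$
Write $2\ln(|\bB_t|^{1/2}T^2/(\delta|\bLambda|^{1/2})) = \ln(|\bB_t|/|\bLambda|) + 2\ln(T^2/\delta)$. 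By Lemma~\ref{lem:logdetratio}, the first term is at most $2d\ln((\lambda+T)/\lambda)$. The second term is at most $2d\ln(T^2/\delta)$, using $d\geq1$. Adding these gives at most $2d\ln((\lambda+T)T^2/(\delta\lambda))$, so the bracket is at most $l$. Bounding $\ln(T^2/\delta)$ by $d\ln(T^2/\delta)$ is wasteful, but it only needs to match the definition of $l$. This yields $Pr(E^{\hat\mu}(t))\geq 1-\delta/T^2$.

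For $E^{\tilde\mu}(t)$, condition on $\cF_{t-1}$. Then $\hat\bmu(t)$ and $\bB_t$ are fixed, and $\tilde\bmu(t)\sim\cN(\hat\bmu(t),v^2\bB_t^{-1})$. For each fixed $i$, the scalar $\bb_i\transpose\tilde\bmu(t)$ is Gaussian with mean $\bb_i\transpose\hat\bmu(t)$ and standard deviation $\sigma_i = v\|\bb_i\|_{\bB_t^{-1}}$. Set $z=\sqrt{4\ln(TN)}$, which satisfies $z\geq1$ once $TN\geq 2$. Lemma~\ref{gaussianConcentration} gives
$$Pr\left(|\bb_i\transpose\tilde\bmu(t)-\bb_i\transpose\hat\bmu(t)|>\sigma_i z \,\middle|\, \cF_{t-1}\right)\leq \frac{e^{-2\ln(TN)}}{\sqrt{\pi}z}\leq \frac{1}{T^2N^2}.$$
If $\sigma_i=0$, the event holds trivially. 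A union bound over the $N$ arms bounds the failure probability by $1/(T^2N)\leq 1/T^2$, uniformly over all histories $\cF_{t-1}$.

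I do not expect a real obstacle. The only delicate points are the bookkeeping that makes the bracket in the first part fit exactly under $l$, which needs $d\geq1$ and an explicit rescaling of $\delta$, and checking the condition $z\geq1$ when applying the Gaussian tail bound.
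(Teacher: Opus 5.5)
Your proof is correct and follows the paper's argument essentially step for step. For $E^{\hat\mu}(t)$, both use Lemma~\ref{lem:confinterval} with $\delta'=\delta/T^2$ together with Lemma~\ref{lem:logdetratio}, implicitly using $d\geq1$ as the paper also does. For $E^{\tilde\mu}(t)$, both use the Gaussian tail of Lemma~\ref{gaussianConcentration} with $z=\sqrt{4\ln(TN)}$ and a union bound over the $N$ arms. Your per-arm bound $1/(T^2N^2)$ is slightly sharper than the paper's $1/(T^2N)$, and your explicit check that $z\geq1$ is a detail the paper leaves implicit, but neither changes the argument.
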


\begin{proof}
\textbf{Bounding the probability of event $E^{\hat{\mu}}(t)$:}
Using Lemma \ref{lem:confinterval}, where $C$ is such that
$\|\bmu\|_\bLambda\leq
C$, for all $i$ with probability at least $1-\delta'$ we have
\begin{align*}
|\bb_i\transpose (\hat \bmu(t)-\bmu)| &\leq \|\bb_i  \|_{\bB_t^{-1}} \left(R
\sqrt{2\ln \left(\frac{|\bB_t|^{1/2}} {\delta'|\bLambda|^{1/2}}\right)}
+ C \right)	\\
&= \|\bb_i  \|_{\bB_t^{-1}} \left(R \sqrt{\ln\frac{|\bB_t|} {|\bLambda|}+
2\ln\frac{1}{\delta'}} + C \right).
\end{align*}
Therefore, using Lemma \ref{lem:logdetratio} and substituting $\delta' =
\delta/T^2$, we get that with probability at least $1-\delta/T^2$, for all $i$,
\begin{align*}
|\bb_i\transpose  (\hat \bmu(t)&-\bmu)| 	\\
\leq\, &\|\bb_i  \|_{\bB_t^{-1}} \left(R \sqrt{2d\ln\frac{\lambda+T} {\lambda}+ 2d\ln\frac{T^2}{\delta}} + C \right)	\\
=\, &\|\bb_i  \|_{\bB_t^{-1}} \left(R\sqrt{2d\ln\left(\frac{(\lambda+T)T^2}{\delta\lambda}\right)}+C\right)		\\
=\, &l\|\bb_i  \|_{\bB_t^{-1}}.
\end{align*}
\textbf{Bounding the probability of event $E^{\tilde{\mu}}(t)$:}
The probability of each individual term
$|\bb_i\transpose(\tilde{\bmu}(t)-\hat{\bmu}(t))|<\sqrt{4\ln(TN)}$ can be
bounded using Lemma \ref{gaussianConcentration} to get
\begin{align*}
Pr&\left(|\bb_i\transpose(\tilde{\bmu}(t)-\hat{\bmu}(t))|\geq
v\|\bb_i\|_{\bB^{-1}_t}\sqrt{4\ln(TN)} \right)\\
&\leq \frac{e^{-2\ln{TN}}}{\sqrt{\pi4\ln(TN)}}\leq\frac{1}{T^2N}.
\end{align*}
We complete the proof by taking a union bound over all $N$ vectors~$\bb_i$.
Notice that we took a different approach than~\cite{agrawal2013thomson} to
avoid the dependence on the ambient dimension~$D$.
\end{proof}
\vspace{-0.5em}
\begin{lemma}\label{lemmaWithLambda}
For any filtration $\cF_{t-1}$ such that $E^{\hat\mu}(t)$ is true,
$$Pr\left(\bb_{a^*}\transpose\tilde\bmu(t)>\bb_{a^*}\transpose\bmu \,|\,\cF_{t-1}\right)\geq \frac{1}{4e\sqrt{\pi}}.$$
\end{lemma}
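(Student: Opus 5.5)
Conditioned on $\cF_{t-1}$, the quantities $\hat\bmu(t)$ and $\bB_t$ are fixed and $\tilde\bmu(t)\sim\cN(\hat\bmu(t),v^2\bB_t^{-1})$. Hence the scalar $Z=\bb_{a^*}\transpose\tilde\bmu(t)$ is Gaussian with mean $m=\bb_{a^*}\transpose\hat\bmu(t)$ and standard deviation $\sigma=v\|\bb_{a^*}\|_{\bB_t^{-1}}$. The plan is to standardize the event $\{Z>\bb_{a^*}\transpose\bmu\}$ and apply the lower tail bound of Lemma~\ref{gaussianConcentration}. We have
\[
Pr\left(Z>\bb_{a^*}\transpose\bmu\,|\,\cF_{t-1}\right)=Pr\left(\frac{Z-m}{\sigma}>\frac{\bb_{a^*}\transpose\bmu-\bb_{a^*}\transpose\hat\bmu(t)}{\sigma}\,\Big|\,\cF_{t-1}\right).
\]
The right-hand threshold is $z_t=\frac{\bb_{a^*}\transpose(\bmu-\hat\bmu(t))}{v\|\bb_{a^*}\|_{\bB_t^{-1}}}$. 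Since $E^{\hat\mu}(t)$ holds, $|\bb_{a^*}\transpose(\hat\bmu(t)-\bmu)|\le l\|\bb_{a^*}\|_{\bB_t^{-1}}$, so $|z_t|\le l/v$.

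The key step is to show $l/v\le 1$. Comparing $l=R\sqrt{2d\ln((\lambda+T)T^2/(\delta\lambda))}+C$ with $v=R\sqrt{6d\ln((\lambda+T)/(\delta\lambda))}+C$, we need
\[
\ln\frac{\lambda+T}{\delta\lambda}+2\ln T\le 3\ln\frac{\lambda+T}{\delta\lambda}.
\]
This is equivalent to $T\le(\lambda+T)/(\delta\lambda)$, which is immediate because $\delta<1$ and $(\lambda+T)/\lambda\ge T$ whenever $\lambda\le1$. If $\lambda>1$ the inequality needs a little more care, using $(\lambda+T)/\lambda\ge1$ and $1/\delta$. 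I expect this constant-matching to be the main nuisance. If it fails in some regime, I would fall back to bounding $|z_t|\le l/v\le\sqrt{3}$ or similar, which still gives a constant probability, only a worse one.

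With $|z_t|\le1$ we get $Pr((Z-m)/\sigma>z_t)\ge Pr((Z-m)/\sigma>1)$. By symmetry this equals $\tfrac12 Pr(|Z-m|>\sigma)$. Applying the lower bound in Lemma~\ref{gaussianConcentration} at $z=1$ gives $\tfrac12\cdot\frac{1}{2\sqrt{\pi}}e^{-1/2}=\frac{1}{4\sqrt{\pi}e^{1/2}}\ge\frac{1}{4e\sqrt{\pi}}$. This is the claimed bound, with some slack that absorbs minor constant issues. The degenerate case $\|\bb_{a^*}\|_{\bB_t^{-1}}=0$ cannot occur, since $\bB_t$ is positive definite and $\bb_{a^*}\neq0$, being a row of an orthogonal matrix.
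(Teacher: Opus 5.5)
You follow the paper's route: standardize $\bb_{a^*}\transpose\tilde\bmu(t)$, use $E^{\hat\mu}(t)$ to get $|z_t|\le l/v\le 1$, and invoke the lower bound of Lemma~\ref{gaussianConcentration}. Your last step is more careful than the paper's. You raise the threshold to $1$ by monotonicity and then halve the two-sided bound at $z=1$. The paper instead plugs $|Z_t|\le 1$ into a bound that the lemma only provides for $z\ge 1$.

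The gap is the inequality $l\le v$ when $\lambda>1$. Your reduction is right: for $R>0$, $l\le v$ is equivalent to $T\le(\lambda+T)/(\delta\lambda)$, i.e.\ $\delta\le 1/T+1/\lambda$. For $\lambda\le 1$ this holds, as you show. For $\lambda>1$ no extra care will prove it, because it is false in general. For example, $\lambda=4$, $\delta=1/2$, $T=100$ gives $\delta\lambda T=200>104=\lambda+T$. The bound $(\lambda+T)/(\delta\lambda)\ge 1/\delta$ you mention reaches $T$ only if $\delta\le 1/T$.

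Your fallback does not rescue the statement either. The lemma claims the specific constant $1/(4e\sqrt\pi)\approx 0.052$, and already at threshold $\sqrt3$ the exact one-sided Gaussian tail is only about $0.042$. Moreover, $l/v$ is not bounded by any absolute constant. For $\lambda\gg T$ and $\delta$ near $1$ one has $\ln\frac{\lambda+T}{\delta\lambda}\to 0$, so $v\to C$ while $l\ge R\sqrt{4d\ln T}+C$.

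So the argument and the constant are valid only under an extra hypothesis, such as $\lambda\le 1$ or $\delta\le 1/T+1/\lambda$. Alternatively, enlarge $v$ so that $v\ge l$, e.g.\ by putting the factor $T^2$ inside its logarithm. The paper's proof relies on the same unstated condition when it simply asserts $l/v\le 1$. You should state that condition explicitly rather than present the case $\lambda>1$ as a routine check.
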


\begin{proof}
Since $\bb_{a^*}\transpose\tilde\bmu(t)$ is a Gaussian random variable with
the mean $\bb_{a^*}\transpose\hat\bmu(t)$ and the standard deviation
$v\|\bb_{a^*}\|_{B_t^{-1}}$, we can use the anti-concentration inequality in
Lemma \ref{gaussianConcentration},
\begin{align*}
Pr&\left(  \bb_{a^*}\transpose\tilde\bmu(t)\geq \bb_{a^*}\transpose\bmu \,|\,
\cF_{t-1}\right)		\\
=\,&Pr\left( \frac{\bb_{a^*}\transpose\tilde\bmu(t)-\bb_{a^*}\transpose\hat\bmu(t)}{v\|\bb_{a^*}\|_{\bB_t^{-1}}}\geq \frac{\bb_{a^*}\transpose\bmu-\bb_{a^*}\transpose\hat\bmu(t)}{v\|\bb_{a^*}\|_{\bB_t^{-1}}} \,|\, \cF_{t-1}\right)	\\
\geq\,&\frac{1}{4\sqrt{\pi}Z_t}e^{-Z_t^2},	\\
\end{align*}
\vskip -3.5em
$$ \qquad \mathrm{where} \quad|Z_t| =
\left|\frac{\bb_{a^*}\transpose\bmu-\bb_{a^*}\transpose\hat\bmu(t)}{v\|\bb_{a^*}
\|_{\bB_t^{-1}}}\right|.$$
Since we consider a filtration $\cF_{t-1}$ such that $E^{\hat\mu}(t)$ is true,
we can upper-bound the numerator to get
\begin{align*}
|Z_t| \leq \frac{l\|\bb_{a^*}\|_{\bB_t^{-1}}}{v\|\bb_{a^*}\|_{\bB_t^{-1}}}=\frac{l}{v}\leq1.
\end{align*}
 \vspace{-2.13em}
Finally,
$$Pr\left(\bb_{a^*}\transpose\tilde\bmu(t)>\bb_{a^*}\transpose\bmu\,|\,\cF_{t-1} \right)\geq \frac{1}{4e\sqrt{\pi}}.$$
\vskip -1.5em
\end{proof}
\begin{lemma}\label{atIsUnsaturated}
For any filtration $\cF_{t-1}$ such that $E^{\hat\mu}(t)$ is true,
$$Pr(a(t)\not\in C(t)\,|\,\cF_{t-1}) \geq \frac{1}{4e\sqrt{\pi}}-\frac{1}{T^2}.$$
\end{lemma}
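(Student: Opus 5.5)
The plan follows the Agrawal--Goyal argument: show that whenever the sample $\tilde\bmu(t)$ is both concentrated and optimistic at $a^*$, no saturated arm can win the $\argmax$. Concretely, I will prove the event inclusion
$$\left\{\bb_{a^*}\transpose\tilde\bmu(t)>\bb_{a^*}\transpose\bmu\right\}\cap E^{\tilde\mu}(t)\subseteq\{a(t)\notin C(t)\}$$
on any filtration $\cF_{t-1}$ for which $E^{\hat\mu}(t)$ holds. The probability bound then follows from Lemma~\ref{lemmaWithLambda} and Lemma~\ref{concentrationOfMus}.

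\textbf{Step 1 (the inclusion).} Fix a saturated arm $j\in C(t)$. Under $E^{\hat\mu}(t)$ and $E^{\tilde\mu}(t)$, the triangle inequality gives
$$\bb_j\transpose\tilde\bmu(t)\leq \bb_j\transpose\bmu+\left(l+v\sqrt{4\ln(TN)}\right)\|\bb_j\|_{\bB_t^{-1}}=\bb_j\transpose\bmu+g\|\bb_j\|_{\bB_t^{-1}}.$$
Saturation means $\Delta_j>g\|\bb_j\|_{\bB_t^{-1}}$. Therefore
$$\bb_j\transpose\tilde\bmu(t)<\bb_j\transpose\bmu+\Delta_j=\bb_{a^*}\transpose\bmu.$$
If in addition $\bb_{a^*}\transpose\tilde\bmu(t)>\bb_{a^*}\transpose\bmu$, then every saturated arm has a strictly smaller sampled value than $a^*$. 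Since $a^*$ is itself unsaturated, the maximizer $a(t)$ cannot lie in $C(t)$.

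\textbf{Step 2 (probabilities).} Using the inclusion,
$$Pr(a(t)\notin C(t)\,|\,\cF_{t-1})\geq Pr\left(\bb_{a^*}\transpose\tilde\bmu(t)>\bb_{a^*}\transpose\bmu\,|\,\cF_{t-1}\right)-Pr\left(\overline{E^{\tilde\mu}(t)}\,|\,\cF_{t-1}\right).$$
Lemma~\ref{lemmaWithLambda} bounds the first term below by $1/(4e\sqrt{\pi})$. Lemma~\ref{concentrationOfMus} bounds the second term above by $1/T^2$, and it does so for every filtration. Combining the two gives the claimed bound.

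The only delicate point is conditioning. $E^{\hat\mu}(t)$ depends only on $\hat\bmu(t)$ and $\bB_t$, so it is $\cF_{t-1}$-measurable and can be treated as fixed. The randomness that remains is solely in $\tilde\bmu(t)$. That is exactly the setting of both cited lemmas, so the two bounds may be combined through the union-type inequality above.
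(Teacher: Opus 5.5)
Your proposal is correct and is essentially the paper's proof: it uses the same inclusion (optimism of the sample at $a^*$ together with $E^{\tilde\mu}(t)$ pushes every saturated arm's sampled value below $\bb_{a^*}\transpose\bmu$), then the same lower bound $Pr(\bb_{a^*}\transpose\tilde\bmu(t)>\bb_{a^*}\transpose\bmu\,|\,\cF_{t-1})-Pr(\overline{E^{\tilde\mu}(t)}\,|\,\cF_{t-1})$, closed by Lemmas~\ref{lemmaWithLambda} and~\ref{concentrationOfMus}. You subtract the correct event $\overline{E^{\tilde\mu}(t)}$, whereas the paper's corresponding display has the typo $\overline{E^{\hat\mu}(t)}$.
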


\begin{proof}
The algorithm chooses the arm with the highest value of
$\bb_i\transpose\tilde{\bmu}(t)$ to be played at time $t$. Therefore if
$\bb_{a^*}\transpose\tilde{\bmu}(t)$ is greater than
$\bb_{j}\transpose\tilde{\bmu}(t)$ for all saturated arms,
i.e.,~$\bb_{a^*}\transpose\tilde{\bmu}(t)>\bb_{j}\transpose\tilde{\bmu}(t),\,
\forall j\in C(t)$, then one of the unsaturated arms (which include the optimal
arm and other suboptimal unsaturated arms) must be played. Therefore,
\begin{align*}
Pr(a(t)\not\in\ & C(t)\,|\,\cF_{t-1})		\\
\geq\, &Pr(\bb_{a^*}\transpose\tilde{\bmu}(t)>\bb_{j}\transpose\tilde{\bmu}(t),\,\forall j\in C(t)\,|\,\cF_{t-1}).
\end{align*}
By definition, for all saturated arms, i.e.,~for all $j\in C(t)$,
$\Delta_j>g\|\bb_{j}\|_{\bB_t^{-1}}$. Now if both of the events
$E^{\hat{\mu}(t)}$ and $E^{\tilde{\mu}(t)}$ are true then, by definition of
these events, for all $j\in C(t)$, $\bb_{j}\transpose\tilde{\bmu}(t)\leq
\bb_{j}\transpose{\bmu}(t)+g\|\bb_{j}\|_{\bB_t^{-1}}$. Therefore, given
the filtration $\cF_{t-1}$, such that $E^{\hat{\mu}}(t)$ is true, either
$E^{\tilde{\mu}}(t)$ is false, or else for all $j\in C(t)$, 
$$\bb_{j}\transpose\tilde{\bmu}(t) \leq \bb_{j}\transpose{\bmu}+g\|\bb_{j}\|_{\bB_t^{-1}} \leq \bb_{a^*}\transpose{\bmu}.$$
Hence, for any $\cF_{t-1}$ such that $E^{\hat{\mu}}(t)$ is true, 
\begin{align*}
Pr(\bb_{a^*}&\transpose\tilde{\bmu}(t)>\bb_{j}\transpose\tilde{\bmu}(t),\,
\forall j\in C(t)\,|\,\cF_{t-1})		\\
\geq\,&Pr(\bb_{a^*}\transpose\tilde{\bmu}(t)>\bb_{a^*}\transpose{\bmu}\,|\,\cF_{t-1})-Pr\left(\overline{E^{\hat{\mu}}(t)}\,|\,\cF_{t-1}\right)	\\
\geq\, &\frac{1}{4e\sqrt{\pi}}-\frac{1}{T^2}.
\end{align*}
In the last inequality we used Lemma~\ref{concentrationOfMus} and
Lemma~\ref{lemmaWithLambda}.
\end{proof}

\begin{lemma}\label{lem:regretBound}
For any filtration $\cF_{t-1}$ such that $E^{\hat\mu}(t)$ is true,
$$\E[\Delta_{a(t)}\,|\,\cF_{t-1}]\leq \frac{11g}{p}\E[\|\bb_{a(t)}\|_{\bB_t^{-1}}\,|\,\cF_{t-1}]+\frac{1}{T^2}$$
\end{lemma}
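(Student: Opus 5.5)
Following Agrawal and Goyal, we compare the played arm with the least uncertain unsaturated arm. Fix $\cF_{t-1}$ such that $E^{\hat\mu}(t)$ holds. Define
$$\bar a(t)=\argmin_{j\notin C(t)}\|\bb_j\|_{\bB_t^{-1}}.$$
This arm is $\cF_{t-1}$-measurable, since $C(t)$ and $\bB_t$ are determined by $\cF_{t-1}$. It is well defined because $a^*\notin C(t)$.

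\emph{Step 1: regret bound on $E^{\tilde\mu}(t)$.} Assume $E^{\tilde\mu}(t)$ also holds. Combining the two events gives $|\bb_i\transpose\tilde\bmu(t)-\bb_i\transpose\bmu|\le g\|\bb_i\|_{\bB_t^{-1}}$ for every $i$, since $v\sqrt{4\ln TN}+l=g$. Because $a(t)$ maximizes $\bb_i\transpose\tilde\bmu(t)$, we have $\bb_{a(t)}\transpose\tilde\bmu(t)\ge\bb_{\bar a(t)}\transpose\tilde\bmu(t)$. Write $\Delta_{a(t)}=\Delta_{\bar a(t)}+(\bb_{\bar a(t)}\transpose\bmu-\bb_{a(t)}\transpose\bmu)$. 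The first term is at most $g\|\bb_{\bar a(t)}\|_{\bB_t^{-1}}$ because $\bar a(t)$ is unsaturated. Passing from $\bmu$ to $\tilde\bmu(t)$ in the second term bounds it by $g\|\bb_{\bar a(t)}\|_{\bB_t^{-1}}+g\|\bb_{a(t)}\|_{\bB_t^{-1}}$. Altogether,
$$\Delta_{a(t)}\le 2g\|\bb_{\bar a(t)}\|_{\bB_t^{-1}}+g\|\bb_{a(t)}\|_{\bB_t^{-1}}.$$

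\emph{Step 2: control $\|\bb_{\bar a(t)}\|_{\bB_t^{-1}}$ by the expected norm of the played arm.} This is the key step. By the choice of $\bar a(t)$, whenever $a(t)\notin C(t)$ we have $\|\bb_{a(t)}\|_{\bB_t^{-1}}\ge\|\bb_{\bar a(t)}\|_{\bB_t^{-1}}$. Therefore
$$\E[\|\bb_{a(t)}\|_{\bB_t^{-1}}\mid\cF_{t-1}]\ge\|\bb_{\bar a(t)}\|_{\bB_t^{-1}}\,Pr(a(t)\notin C(t)\mid\cF_{t-1})\ge\|\bb_{\bar a(t)}\|_{\bB_t^{-1}}\,(p-1/T^2),$$
where the last inequality is Lemma~\ref{atIsUnsaturated}. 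For $T$ not tiny, $p-1/T^2\ge p/2$. We therefore aim for $\|\bb_{\bar a(t)}\|_{\bB_t^{-1}}\le\frac{2}{p}\E[\|\bb_{a(t)}\|_{\bB_t^{-1}}\mid\cF_{t-1}]$.

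\emph{Step 3: take expectations.} Split on $E^{\tilde\mu}(t)$. On its complement we use $\Delta_{a(t)}\le 2$, which follows from $|\bb_i\transpose\bmu|\le1$. Lemma~\ref{concentrationOfMus} gives $Pr(\overline{E^{\tilde\mu}(t)}\mid\cF_{t-1})\le 1/T^2$, so this part contributes at most $2/T^2$. On the event, Step 1 together with Step 2 yields
$$\E[\Delta_{a(t)}\mid\cF_{t-1}]\le\left(\frac{4g}{p}+g\right)\E[\|\bb_{a(t)}\|_{\bB_t^{-1}}\mid\cF_{t-1}]+\frac{2}{T^2}.$$
Since $1\le 1/p$, this is at most $\frac{5g}{p}\E[\cdot]+\frac{2}{T^2}$, well within the claimed $\frac{11g}{p}$.

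\emph{Expected obstacles.} The main obstacle is the additive term: the claim states $1/T^2$ rather than $2/T^2$. Matching it requires a careful bookkeeping of the failure event. The intended route is to absorb the slack into the constant: use the generous factor $11$ and a bound like $p-1/T^2\ge p/\text{const}$ to convert part of the additive term into the multiplicative one. A second, minor subtlety is that $\|\bb_i\|_{\bB_t^{-1}}$ is $\cF_{t-1}$-measurable, which is what makes conditioning on $\cF_{t-1}$ clean when the expectation is restricted to $E^{\tilde\mu}(t)$.
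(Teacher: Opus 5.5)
Your Steps 1 and 2 match the paper's argument: the arm $\bar a(t)$, the bound $\Delta_{a(t)}\le 2g\|\bb_{\bar a(t)}\|_{\bB_t^{-1}}+g\|\bb_{a(t)}\|_{\bB_t^{-1}}$ on $E^{\hat\mu}(t)\cap E^{\tilde\mu}(t)$, and $\E[\|\bb_{a(t)}\|_{\bB_t^{-1}}\,|\,\cF_{t-1}]\ge(p-1/T^2)\|\bb_{\bar a(t)}\|_{\bB_t^{-1}}$. The gap is the last step. What you actually prove is $\frac{5g}{p}\E[\cdot]+\frac{2}{T^2}$, which does not imply the stated $\frac{11g}{p}\E[\cdot]+\frac{1}{T^2}$, and your proposed repair does not work. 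You suggest trading the extra $1/T^2$ against the ``generous'' factor $11$. Multiplicative slack can absorb an additive $1/T^2$ only if $\frac{6g}{p}\E[\|\bb_{a(t)}\|_{\bB_t^{-1}}\,|\,\cF_{t-1}]\ge 1/T^2$. No such lower bound holds in general: $\|\bb_i\|_{\bB_t^{-1}}\le\|\bb_i\|_{\bLambda^{-1}}$ can be made arbitrarily small (many nodes, large edge weights) while $g$ stays bounded away from $0$. Nor is the $11$ spare. It is $2\cdot 5+1$, coming from $1/(p-1/T^2)\le 5/p$, which is what lets the argument run for every $T\ge 5$; your $p-1/T^2\ge p/2$ needs $T\ge 7$. (For $T\le 4$ one has $p<1/T^2$, so Lemma~\ref{atIsUnsaturated} is vacuous there; the paper's remark that this case is ``trivial'' does not really cover it either.)

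The missing idea is to sharpen the failure probability, not the constant. The paper reaches $1/T^2$ by writing $+Pr(\overline{E^{\tilde\mu}(t)})$, which tacitly bounds $\Delta_{a(t)}$ by $1$ on the failure event. Under $|\bb_i\transpose\bmu|\le 1$ only your $\Delta_{a(t)}\le 2$ is justified, so your bookkeeping is the more careful one. To close it you need $Pr(\overline{E^{\tilde\mu}(t)}\,|\,\cF_{t-1})\le 1/(2T^2)$. This follows from Lemma~\ref{gaussianConcentration} if you keep the factor that the proof of Lemma~\ref{concentrationOfMus} discards. With $z=\sqrt{4\ln(TN)}$, each arm fails with probability at most $(TN)^{-2}(4\pi\ln(TN))^{-1/2}$, so the union bound gives
\[
Pr\bigl(\overline{E^{\tilde\mu}(t)}\,\big|\,\cF_{t-1}\bigr)\le\frac{1}{T^2N\sqrt{4\pi\ln(TN)}}\le\frac{1}{2T^2}\qquad(TN\ge 2).
\]
The case $N=1$ is trivial, since then $\Delta_{a(t)}\equiv 0$. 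This gives $\E[\Delta_{a(t)}\,|\,\cF_{t-1}]\le\frac{2g}{p-1/T^2}\E[\|\bb_{a(t)}\|_{\bB_t^{-1}}\,|\,\cF_{t-1}]+g\,\E[\|\bb_{a(t)}\|_{\bB_t^{-1}}\,|\,\cF_{t-1}]+\frac{1}{T^2}$. Applying $1/(p-1/T^2)\le 5/p$ for $T\ge 5$ then yields the stated bound.
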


\begin{proof}
Let $\overline{a}(t)$ denote the unsaturated arm with the smallest norm
$\|\bb_i\|_{\bB_t^{-1}}$, i.e.,
$$\overline{a}(t) = \argmin_{i\not\in C(t)}\|\bb_i\|_{\bB_t^{-1}}.$$
Notice that since $C(t)$ and $\|\bb_i\|_{\bB_t^{-1}}$ for all $i$, are fixed on
fixing $\cF_{t-1}$, so is $\overline{a}(t)$. Now, using Lemma
\ref{atIsUnsaturated}, for any $\cF_{t-1}$ such that $E^{\hat\mu}(t)$ is true,
\begin{align*}
\E[\|\bb_{a(t)}&\|_{\bB_t^{-1}}\,|\,\cF_{t-1}]		\\
\geq\,&\E[\|\bb_{a(t)}\|_{\bB_t^{-1}}\,|\,\cF_{t-1},a(t)\not\in C(t)]	\\
&\cdot Pr(a(t)\not\in C(t)\,|\,\cF_{t-1})		\\
\geq\, &\|\bb_{\overline{a}(t)}\|_{\bB_t^{-1}}\left(\frac{1}{4e\sqrt{\pi}}-\frac{1}{T^2}\right).
\end{align*}
Now, if the events $E^{\hat\mu}(t)$ and $E^{\tilde\mu}(t)$ are true, then for
all $i$, by definition,
$\bb_i\transpose\tilde\bmu(t)\leq\bb_i\transpose\bmu+g\|\bb_i\|_{\bB_t^{-1}}$.
Using this observation along with 
$\bb_{a(t)}\transpose\tilde\bmu(t)\geq\bb_i\transpose\tilde\bmu(t)$ for all~$i$,
\begin{align*}
\Delta_{a(t)}=\,&\Delta_{\overline{a}(t)} + (\bb_{\overline{a}(t)}\transpose\bmu-\bb_{a(t)}\transpose\bmu)	\\
\leq\, &\Delta_{\overline{a}(t)} + (\bb_{\overline{a}(t)}\transpose\tilde\bmu(t)-\bb_{a(t)}\transpose\tilde\bmu(t))	\\
&+g\|\bb_{\overline{a}(t)}\|_{\bB_t^{-1}}+g\|\bb_{a(t)}\|_{\bB_t^{-1}}	\\
\leq\,&\Delta_{\overline{a}(t)} +g\|\bb_{\overline{a}(t)}\|_{\bB_t^{-1}}+g\|\bb_{a(t)}\|_{\bB_t^{-1}}	\\
\leq\,&
g\|\bb_{\overline{a}(t)}\|_{\bB_t^{-1}}+g\|\bb_{\overline{a}(t)}\|_{\bB_t^{-1}}
+g\|\bb_{a(t)}\|_{\bB_t^{-1}}.
\end{align*}
Therefore, for any $\cF_{t-1}$ such that $E^{\hat\mu}(t)$ is true, either
$\Delta_{a(t)}\leq2g\|\bb_{\overline{a}(t)}\|_{\bB_t^{-1}}+g\|\bb_{a(t)}\|_{
\bB_t^{-1}}$, or $E^{\tilde\mu}(t)$ is false. 
We can deduce that
\begin{align*}
\E[\Delta_{a(t)}&\,|\,\cF_{t-1}]	\\
\leq\,&\E\left[2g\|\bb_{\overline{a}(t)}\|_{\bB_t^{-1}}+g\|\bb_{a(t)}\|_{\bB_t^{-1}}\,|\,\cF_{t-1}\right] \\
&+ Pr\left(\overline{E^{\tilde\mu}(t)}\right)	\\
\leq\,&\frac{2g}{p-\frac{1}{T^2}}\E\left[\|\bb_{a(t)}\|_{\bB_t^{-1}}\,|\,\cF_{t-1}\right]	\\
&+g\E\left[\|\bb_{a(t)}\|_{\bB_t^{-1}}\,|\,\cF_{t-1}\right] + \frac{1}{T^2}	\\
\leq\,&\frac{11g}{p}\E[\|\bb_{a(t)}\|_{\bB_t^{-1}}\,|\,\cF_{t-1}]+\frac{1}{T^2}.
\end{align*}
\noindent In the last inequality we used that
$1/(p-1/T^2)\leq 5/p,$	
which holds trivially for $T\leq 4$. For $T\geq 5$, we get that
$T^2\geq5e\sqrt{\pi}$, which holds for $T\geq5.$
\end{proof}

\begin{definition}
We define
$\regret'(t) = \regret(t)\cdot I(E^{\hat{\mu}}(t))$.
\end{definition}

\begin{definition}
A sequence of random variables $(Y_t;\, t\geq0)$ is called a
\textbf{super-martingale}
corresponding to a filtration $\cF_t$, if for all $t$, $Y_t$ is
$\cF_t$-measurable, and for $t\geq1$,
$$\E[Y_t-Y_{t-1}\,|\,\cF_{t-1}] \leq 0.$$
\end{definition}

Next, following~\citet{agrawal2013thomson}, we establish a super-martingale
process that will form the basis of our proof of the high-probability regret
bound.

\begin{definition}
Let
\begin{align*}
X_t &= \regret'(t) - \frac{11g}{p}\|\bb_{a(t)}\|_{\bB_t^{-1}}-\frac{1}{T^2}	\\
Y_t &= \sum_{w = 1}^t X_w.
\end{align*}
\end{definition}

\begin{lemma}
$(Y_t;\, t = 0,\dots,T)$ is a super-martingale process with respect to filtration $\cF_t$.
\end{lemma}

\begin{proof}
We need to prove that for all $t \in \{1,\dots,T\}$, and any possible
$\cF_{t-1}$, $\E[Y_t-Y_{t-1}\,|\,\cF_{t-1}] \leq 0$, i.e.
$$\E[\regret'(t)\,|\,\cF_{t-1}]\leq\frac{11g}{p}\|\bb_{a(t)}\|_{\bB_t^{-1}}+\frac
{1}{T^2}.$$
Note that whether $E^{\hat\mu}(t)$ is true or not, is completely determined by
$\cF_{t-1}$. If $\cF_{t-1}$ is such that $E^{\hat\mu}(t)$ is not true, then
$\regret'(t)=\regret(t)\cdot I(E^{\hat{\mu}}(t))=0$, and the above inequality
holds trivially. Moreover, for $\cF_{t-1}$ such that $E^{\hat\mu}(t)$ holds, the
inequality follows from Lemma \ref{lem:regretBound}.
\end{proof}

Unlike \citep{agrawal2013thomson,abbasi2011improved}, we
do not want to require $\lambda\geq 1$. Therefore, we provide the following
lemma that shows the dependence of $\|\bb_{a(t)}\|_{\bB_t^{-1}}^2$
on $\lambda$.

\begin{lemma}\label{lem:bbound}
For all $t$,
$$\|\bb_{a(t)}\|_{\bB_t^{-1}}^2\leq\left(2+\frac{2}{\lambda}
\right)\ln\left(1+\|\bb_{a(t)}\|_{\bB_t^{-1}}^2\right).$$
\end{lemma}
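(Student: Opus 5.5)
Set $x = \|\bb_{a(t)}\|_{\bB_t^{-1}}^2$. The plan is to reduce the statement to a scalar inequality by combining an upper bound on $x$ with an elementary bound on $\ln(1+x)$.

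\textbf{Step 1: bound $x$ in terms of $\lambda$.} Since $\bB_t = \bLambda + \sum_{\tau<t}\bb_{a(\tau)}\bb_{a(\tau)}\transpose \succeq \bLambda = \bLambda_\cL + \lambda\bI_N \succeq \lambda \bI_N$, we have $\bB_t^{-1} \preceq \lambda^{-1}\bI_N$. The vector $\bb_{a(t)}$ is a row of the orthogonal matrix $\bQ$, so $\|\bb_{a(t)}\|_2 = 1$. Therefore
$$x \le \frac{\|\bb_{a(t)}\|_2^2}{\lambda} = \frac{1}{\lambda}.$$

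\textbf{Step 2: a scalar inequality.} For $0 \le x \le M$, the function $h(x) = x/\ln(1+x)$ is increasing; equivalently, $\ln(1+x)$ is concave and so lies above the chord from $0$ to $M$. Hence
$$x \le \frac{M}{\ln(1+M)}\ln(1+x).$$

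\textbf{Step 3: bound the constant.} Take $M = 1/\lambda$. It then suffices to show
$$\frac{M}{\ln(1+M)} \le 2 + 2M \quad \text{for all } M>0.$$
This follows from the standard inequality $\ln(1+M) \ge \frac{M}{1+M}$, which gives $\frac{M}{\ln(1+M)} \le 1+M \le 2+2M$. The factor $2$ in the statement is therefore slack.

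\textbf{Potential obstacle.} The only step needing care is justifying $\|\bb_{a(t)}\|_2 \le 1$. This holds because the rows of an orthogonal $\bQ$ are unit vectors. If the feature normalization were different, one would replace $1/\lambda$ by $\max_i\|\bb_i\|_2^2/\lambda$ and carry that constant through Step 3.

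Combining Steps 1–3 gives
$$\|\bb_{a(t)}\|_{\bB_t^{-1}}^2 \le \left(2+\frac{2}{\lambda}\right)\ln\left(1+\|\bb_{a(t)}\|_{\bB_t^{-1}}^2\right),$$
which is the statement of Lemma~\ref{lem:bbound}.
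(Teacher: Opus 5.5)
Your proof is correct. It shares the paper's first step, $\|\bb_{a(t)}\|^2_{\bB_t^{-1}}\le 1/\lambda$ from $\bB_t\succeq\lambda\bI_N$ and the unit-norm rows of $\bQ$, but handles the scalar inequality differently.

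The paper splits on $\lambda$:
\begin{itemize}
\item for $\lambda\ge 1$ it applies $y\le 2\ln(1+y)$ on $[0,1]$ directly;
\item for $\lambda<1$ it applies the same fact to $y=\lambda x\le 1$, then uses $\ln(1+\lambda x)\le\ln(1+x)$;
\item it finishes with the crude bound $\max(2,2/\lambda)\le 2+2/\lambda$.
\end{itemize}

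You avoid the case split. You use that $x/\ln(1+x)$ is increasing (equivalently, the concavity chord) on $[0,1/\lambda]$, together with $\ln(1+M)\ge M/(1+M)$. This gives the sharper constant $1+1/\lambda$ and makes explicit that the factor $2$ in the statement is slack.

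Your Step~2 is actually dispensable. Applying $\ln(1+y)\ge y/(1+y)$ directly at $y=x$ gives $x\le(1+x)\ln(1+x)\le(1+1/\lambda)\ln(1+x)$ in one line.

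The paper's route needs only the single elementary fact $y\le 2\ln(1+y)$ on $[0,1]$, but pays with a case analysis and a looser constant. Yours is uniform in $\lambda$ and tighter, though the gain only affects constants in Theorem~\ref{mainTheorem}.
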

\begin{proof}
Note, that
$\|\bb_{a(t)}\|_{\bB_t^{-1}}\leq(1/\sqrt{\lambda})\|\bb_{a(t)}\|\leq
(1/\sqrt{\lambda})$ and for all $0\leq x\leq1$ we have
\begin{align}
x\leq2\ln(1 + x).	\label{ineq}
\end{align}
 Now we consider two cases depending on $\lambda$. If
$\lambda\geq1$, we know that $0\leq\|\bb_{a(t)}\|_{\bB_t^{-1}}\leq1$ and
therefore by \eqref{ineq},
$$\|\bb_{a(t)}\|_{\bB_t^{-1}}^2\leq2\ln\left(1+\|\bb_{a(t)}\|_{\bB_t^{-1}}^2\right).$$
Similarly, if $\lambda<1$, then $0\leq\lambda\|\bb_{a(t)}\|^2_{\bB_t^{-1}}\leq 1$
and we get
\begin{align*}
\|\bb_{a(t)}\|_{\bB_t^{-1}}^2&\leq\frac{2}{\lambda}\ln\left(1+\lambda\|\bb_{a(t)}\|_{\bB_t^{-1}}^2\right)	\\
&\leq\frac{2}{\lambda}\ln\left(1+\|\bb_{a(t)}\|_{\bB_t^{-1}}^2\right).
\end{align*}
Combining the two, we get that for all $\lambda\geq0$,
\begin{align*}
\|\bb_{a(t)}\|_{\bB_t^{-1}}^2&\leq\max\left(2,\ \frac{2}{\lambda}\right)\ln\left(1+\|\bb_{a(t)}\|_{\bB_t^{-1}}^2\right)	\\
&\leq\left(2+\frac{2}{\lambda}\right)\ln\left(1+\|\bb_{a(t)}\|_{\bB_t^{-1}}^2\right).
\end{align*}
\vskip -2em
\end{proof}


\begin{proof}[\textbf{Proof of Theorem~\ref{mainTheorem}}]
First, notice that $X_t$ is bounded as  $|X_t|\leq
1+11g/(p\sqrt\lambda)+1/T^2\leq (11/\sqrt\lambda+2)g/p$. Thus, we can apply
Azuma-Hoeffding inequality to obtain that with probability
 at least $1-\delta/2$,
\begin{align*}
\sum_{t=1}^T\regret'(t)\leq&\sum_{t=1}^T\frac{11g}{p}\|\bb_{a(t)}\|_{\bB_t^{-1}}+\sum_{t=1}^T\frac{1}{T^2}	\\
&+\sqrt{2\left(\sum_{t=1}^T\frac{g^2}{p^2}\left(\frac{11}{\sqrt{\lambda}}+2\right)^2\right)\ln\frac{2}{\delta}}.
\end{align*}
Since $p$ and $g$ are constants, then with
probability $1-\delta/2$,
\begin{align*}
\sum_{t=1}^T\regret'(t)\leq&\frac{11g}{p}\sum_{t=1}^T\|\bb_{a(t)}\|_{\bB_t^{-1}}+\frac{1}{T}	\\
&+\frac{g}{p}\left(\frac{11}{\sqrt{\lambda}}+2\right)\sqrt{2T\ln\frac{2}{\delta}}.
\end{align*}


The last step is to upper-bound $\sum_{t=1}^T\|\bb_{a(t)}\|_{\bB_t^{-1}}$. For
this purpose, \citet{agrawal2013thomson} rely on the analysis of \citet{auer2002using} and the assumption that $\lambda\geq1$. We provide an alternative approach
using Cauchy-Schwarz inequality, Lemma~\ref{lemma:logdetassum}, and
Lemma~\ref{lem:bbound} to get
\begin{align*}
\sum_{t=1}^T\|&\bb_{a(t)}\|_{\bB_t^{-1}}\leq\sqrt{T\sum_{t=1}^T\|\bb_{a(t)}\|^2_
{ \bB_t^{-1}}}	\\
&\leq\sqrt{T\left(2+\frac{2}{\lambda}\right)\ln\frac{|\bB_T|}{|\bLambda|}}
\leq\sqrt{\frac{4+4\lambda}{\lambda}dT\ln{\frac{\lambda+T}{\lambda}}}.
\end{align*}

Finally, we know that $E^{\hat\mu}(t)$ holds for all $t$ with probability at
least $1-\frac{\delta}{2}$ and $\regret'(t) = \regret(t)$ for all $t$ with
probability at least $1-\frac{\delta}{2}$. Hence, with probability $1-\delta$,
\begin{align*}
\cR(T)\leq\,&\frac{11g}{p}\sqrt{\frac{4+4\lambda}{\lambda}dT\ln\frac{\lambda+T}{\lambda}}+\frac{1}{T} 	\\
&+ \frac{g}{p}\left(\frac{11}{\sqrt{\lambda}}+2\right)\sqrt{2T\ln\frac{2}{\delta}}.
\end{align*}
\end{proof}

 \vspace{-0.7cm}

\section{Experiments}
\label{sec:experiments}
The aim of this section is to give empirical
evidence that SpectralTS -- a faster and simpler algorithm
than SpectralUCB -- also delivers comparable or better empirical
performance.
For the synthetic experiment, we considered a \textit{Barab\'asi-Albert} (BA)
model~(\citeyear{barabasi1999emergence}), known for its preferential
attachment property, common in real-world graphs.
We generated a random graph using BA model with $N = 250$ nodes and the degree
parameter $3$. For each run, we generated the weights of the edges uniformly
at random. Then we generated $\bmu$, a random vector of weights (unknown
to the algorithms) in order to compute the payoffs and evaluated the cumulative
regret. The $\bmu$ in each simulation was a random linear combination
of the first 3 smoothest eigenvectors of the graph Laplacian.
In all experiments, we had $\delta = 0.001$, $\lambda=1$, and $R =
0.01$. We evaluated the algorithms in $T < N$ regime, where the linear bandit
algorithms are not expected to perform well. Figure~\ref{fig:syn} shows the
results averaged over 10 simulations. Notice that while the result of SpectralTS
is comparable to SpectralUCB, its computational time is much faster due to the
reasons discussed
in Section~\ref{sec:algo}.
Recall that while both algorithms compute the least-square
problem of the same size, SpectralUCB has then to compute the confidence
interval for each arm.

\begin{figure}[ht]
 \begin{center}
\includegraphics[width=0.45\columnwidth]
{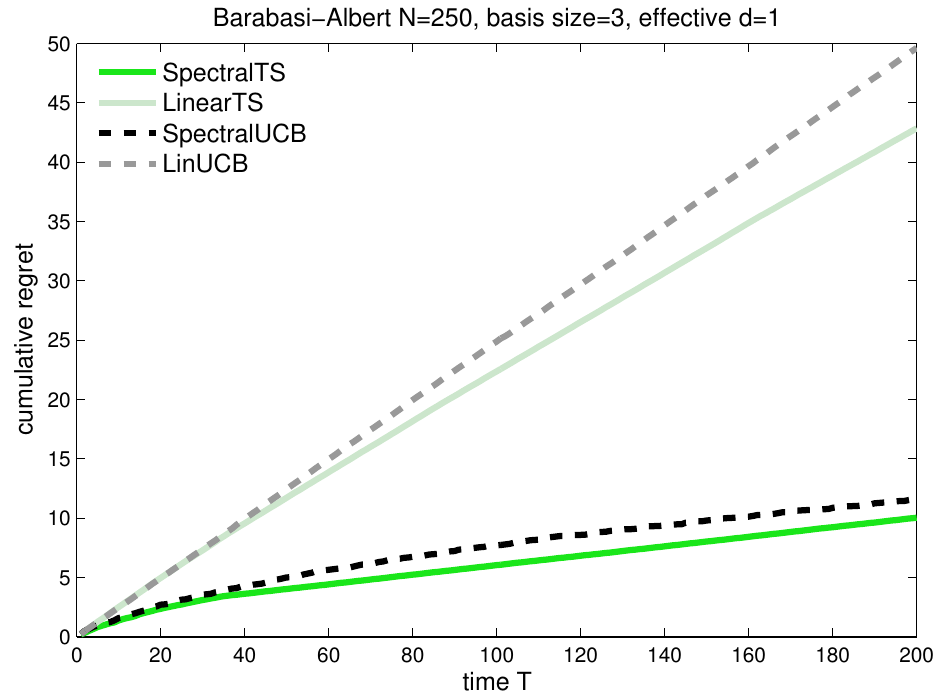}
 \includegraphics[width=0.45\columnwidth]
 {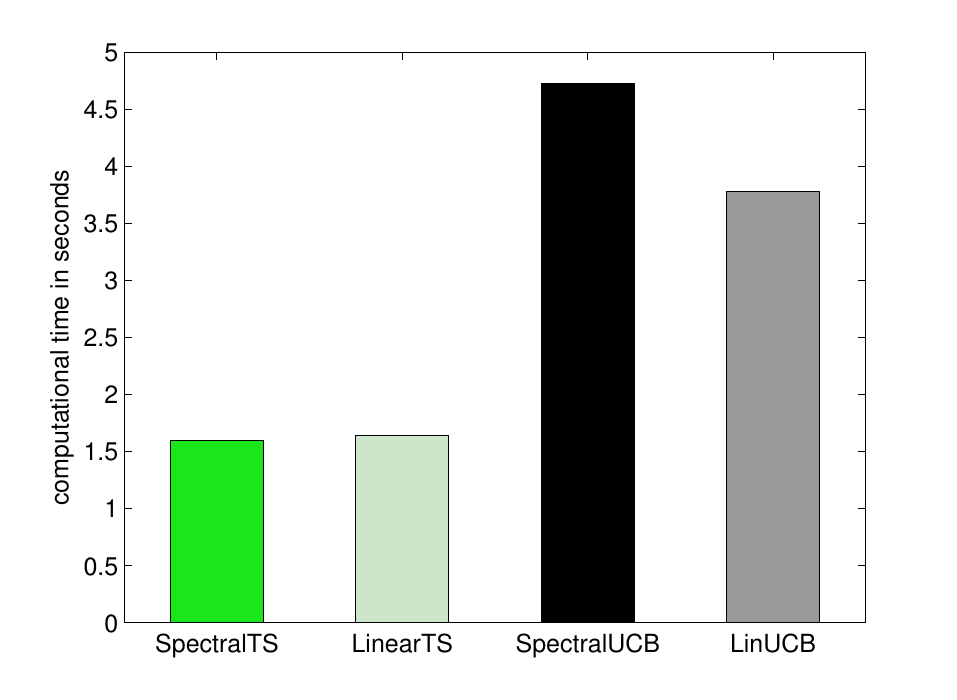}
\vspace{-0.5em}
\caption{Barab\'asi-Albert random graph results}
  \label{fig:syn}
\end{center}
\vspace{-0.25em}
 \end{figure}
\vspace{-.3cm}
Furthermore, we performed the comparison of the algorithms on the
MovieLens dataset \citep{movielens} of the movie ratings.
The graph in this dataset is the graph of 2019 movies with edges corresponding
to the
movie similarities. For each user we have a graph function, unknown to the
algorithms, that assigns to each node, the rating of the particular user.
A detailed description on the preprocessing is deferred
to~\citep{valko2014spectral}.
Our goal is then to recommend the most highly rated content.
Figure~\ref{fig:movavg} shows the MovieLens data results averaged
over 10 randomly sampled users. Notice that the results
follow the same trends as  for the synthetic data.

Our results show that the empirical
performance of the computationally more efficient
SpectralTS is similar or slightly better than the one of SpectralUCB.
The main contribution of this paper is the analysis that
backs up this evidence with a theoretical justification.

%

\begin{figure}[H]
\begin{center}
\includegraphics[width=0.45\columnwidth]
{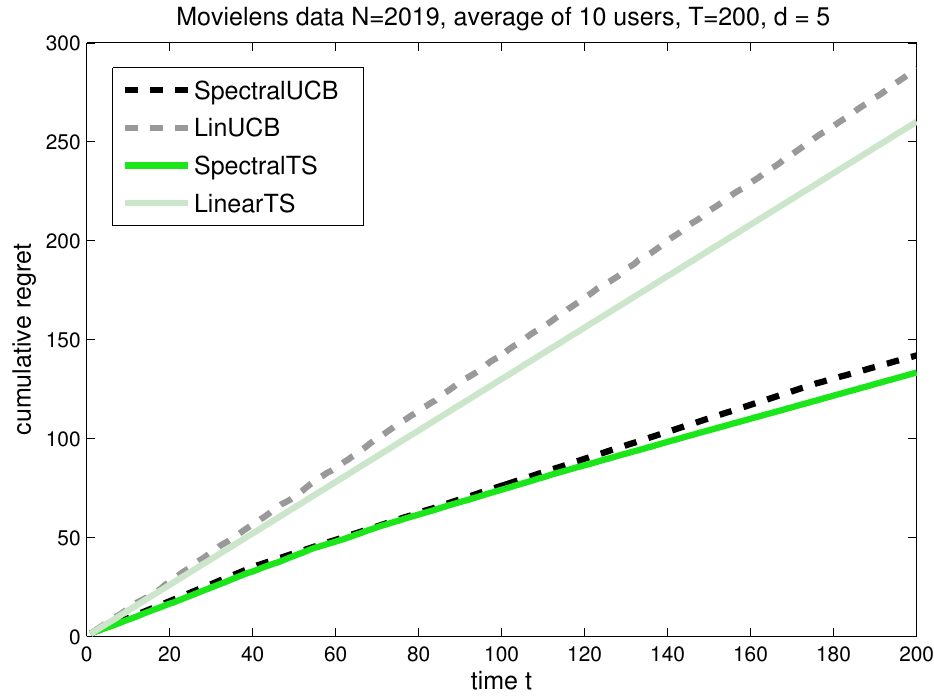}
 \includegraphics[width=0.45\columnwidth]
 {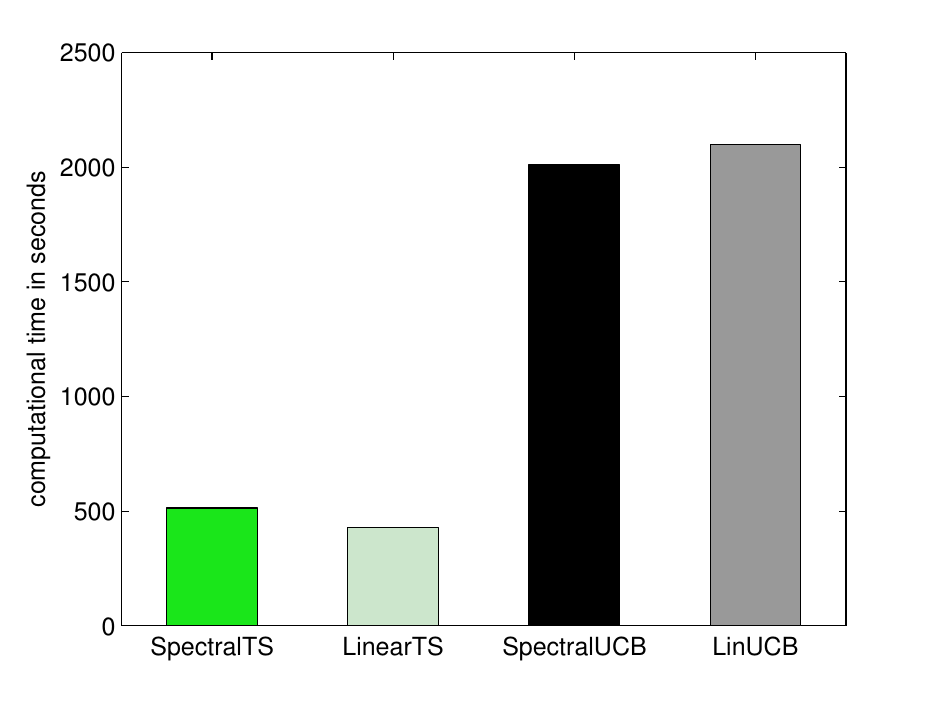}
\vspace{-0.5em}
\caption{MovieLens data results}
  \label{fig:movavg}
\end{center}
 \vspace{-0.25em}
 \end{figure}

%

 \vspace{-.65cm}

\section{Conclusion}
\label{sec:conclusion}

We considered the spectral bandit setting with a reward
function assumed to be smooth on a given graph and proposed 
Spectral Thompson Sampling (TS) for it. Our main contribution is stated in
Theorem~\ref{mainTheorem} where we prove that the regret bound scales
with effective dimension
$d$, typically much smaller than the ambient dimension $D$,
which is the case of linear bandit algorithms.
In our experiments, we showed that
SpectralTS outperforms LinearUCB and LinearTS, and provides similar results
to SpectralUCB in the regime when $T<N$. Moreover, we showed that SpectralTS is
computationally less expensive than SpectralUCB.


\section{Acknowledgements}
\label{sec:Acknowledgements}
The research presented in this paper was supported by French Ministry of
Higher Education and Research, by European Community's
Seventh Framework Programme (FP7/2007-2013) under grant agreement n$^{\rm
o}$270327 (project CompLACS), and by Intel Corporation.


\bibliography{library}
\bibliographystyle{aaai}

\end{document}